\def\eqref#1{equation~\ref{#1}}
\def\1{\bm{1}}
\DeclareMathAlphabet{\mathsfit}{\encodingdefault}{\sfdefault}{m}{sl}
\SetMathAlphabet{\mathsfit}{bold}{\encodingdefault}{\sfdefault}{bx}{n}
\definecolor{ao(english)}{rgb}{0.0, 0.5, 0.0}
\definecolor{cornellred}{rgb}{0.7, 0.11, 0.11}
\definecolor{pf7}{RGB}{166, 118, 29}
\newcommand{\furl}[1]{\footnote{\url{#1}}}
\newtheorem{proposition}{Proposition}
\title{The path of least Resistance: Guiding LLM reasoning trajectories with Prefix consensus}
\author{Ishan Jindal\thanks{Work done while at SRID, \textdagger Contributed equally} \\
Fujitsu Research India\\
\texttt{ishan.jindal@fujitsu.com} \\
\AND
Sai Prashanth Akuthota\textdagger, Jayant Taneja\textdagger, Sachin Dev Sharma \\
Samsung R\&D Institute India-Delhi \\
\texttt{\{a.prashanth, j.taneja, sachin.dev\}@samsung.com}
}
\begin{document}

\maketitle

\begin{abstract}
Large language models achieve strong reasoning performance, but inference strategies such as Self-Consistency (SC) are computationally expensive, as they fully expand all reasoning traces. We introduce \textbf{PoLR} (\textit{Path of Least Resistance}), the first inference-time method to leverage \emph{prefix consistency} for compute-efficient reasoning. PoLR clusters short prefixes of reasoning traces, identifies the dominant cluster, and expands all paths in that cluster, preserving the accuracy benefits of SC while substantially reducing token usage and latency. Our theoretical analysis, framed via mutual information and entropy, explains why early reasoning steps encode strong signals predictive of final correctness. Empirically, PoLR consistently matches or exceeds SC across \textsc{GSM8K}, \textsc{Math500}, \textsc{AIME24/25}, and \textsc{GPQA-Diamond}, reducing token usage by up to 60\% and wall-clock latency by up to 50\%. Moreover, PoLR is fully complementary to adaptive inference methods (e.g., Adaptive Consistency, Early-Stopping SC) and can serve as a drop-in pre-filter, making SC substantially more efficient and scalable without requiring model fine-tuning.
\end{abstract}

\section{Introduction}

Large Language Models (LLMs) have recently achieved remarkable performance on complex reasoning tasks \cite{grattafiori2024llama,yang2025qwen3,guo2025deepseek,jindal-etal-2025-offloaded}, ranging from grade-school math \citep{cobbe2021training} to graduate-level problem solving \citep{hendrycks2021measuring,rein2023gpqa}. Among inference-time strategies, \emph{Self-Consistency} (SC) decoding \citep{wangself} has emerged as a strong default: by sampling multiple reasoning traces and taking a majority vote over their final answers, SC substantially improves accuracy over greedy or single-sample decoding. However, it incurs substantial computational cost because all reasoning traces must be expanded to completion.

\begin{figure}[t]
\begin{subfigure}{.67\textwidth}
  \centering
  \includegraphics[width=0.95\linewidth]{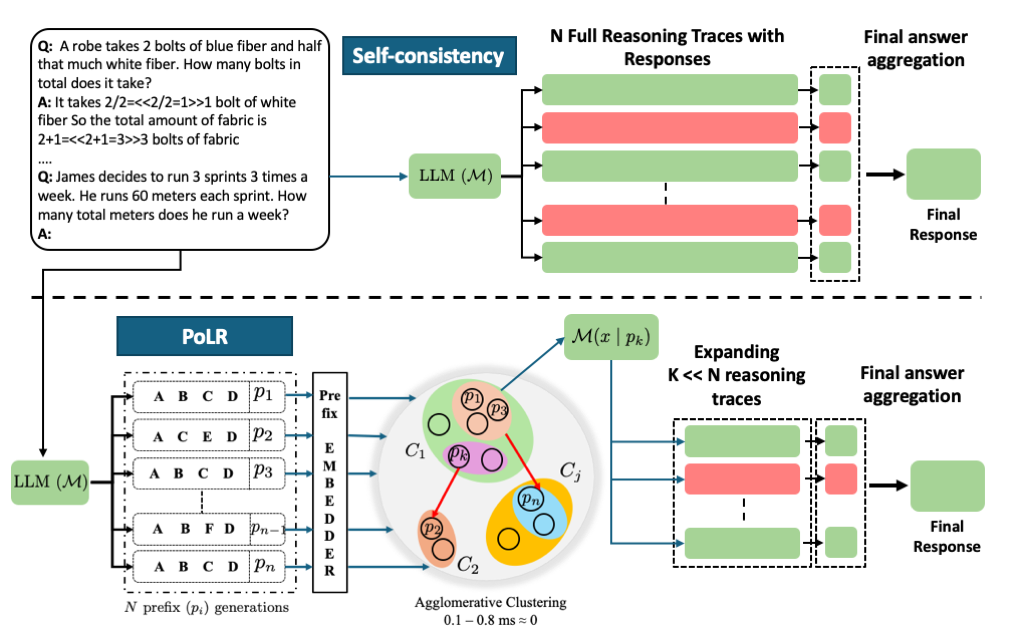}
  \caption{PoLR Overview}
  \label{subfig:overview}
\end{subfigure}%
\begin{subfigure}{.3\textwidth}
  \centering
  \includegraphics[width=0.9\linewidth]{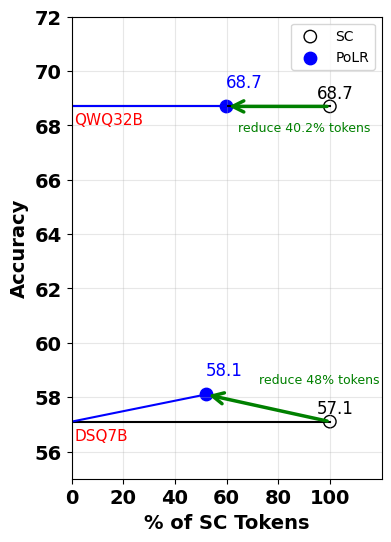}
  \caption{Token-accuracy plot GPQA-Diamond}
  \label{subfig:otokenmatch}
\end{subfigure}
\caption{(a) Comparison of Self-Consistency (SC) and PoLR.
\textbf{Top:} SC expands all $N$ sampled traces to completion (100\% expansion), then aggregates answers via majority vote. \textbf{Bottom:} PoLR first generates $N$ short prefixes of length $L_p$, embeds and clusters them, and selects the dominant cluster. 
All $K \ll N$ traces from this cluster are expanded, after which majority voting is applied. (b) PoLR exceeds SC accuracy, while reducing token cost by approx 50\%.}

\label{fig:overview}
\end{figure}

To reduce SC’s compute requirements, several inference-time methods such as Adaptive Consistency (AC) \cite{aggarwal2023let} and Early-Stop Self-Consistency (ESC) \cite{liescape2024} have been proposed. These methods expand reasoning traces sequentially and stop generating them only when sufficient final-answer agreement is observed. Though effective, they share a fundamental limitation: answer-level agreement is only observable \emph{after} full reasoning traces is generated. As a result, they cannot exploit the rich structural information that might appear earlier in the reasoning process and their efficiency remains limited by the need to generate complete reasoning traces.

Recently, an alternative line of research shows that the early stages of reasoning traces carry disproportionately strong signals about the eventual solution, a phenomenon known as \emph{prefix consistency}. Formally, if $r_i$ denotes a reasoning trace, then its first $L$ tokens $r_{i}[1:L]$, termed as prefix,  tend to exhibit similarity across reasoning traces, irrespective of their later steps. \citet{ji2025first} exploited this phenomenon at \emph{training time}, that is, fine-tuning models on prefixes to improve reasoning while reducing inference cost. However, this requires expensive fine-tuning and cannot be applied directly at inference.

This gap motivates a method that reduces Self-Consistency cost by exploiting early steps of reasoning traces rather than waiting for full trajectories. To address this need, we introduce \textbf{PoLR} (\emph{Path of Least Resistance}), the first method to leverage prefix consistency for \emph{inference-time Self-Consistency}. To leverage Prefix consistency, PoLR generates $N$ short prefixes, embeds and clusters them, and only expands the prefixes to full reasoning traces for the dominant cluster, reducing wasted token generation and adaptively allocating compute to promising paths. This approach preserves SC’s accuracy while cutting token usage and latency dramatically as depicted in Figure \ref{fig:overview}. The key contributions of this work are:
\begin{itemize}[noitemsep,leftmargin=15pt]
    \item  PoLR is a drop-in SC replacement that clusters partial reasoning traces and selectively expands the dominant cluster, substantially reducing inference cost.  
 
    \item  Across math (\textsc{GSM8K}, \textsc{Math500}, \textsc{AIME24/25}), commonsense, and science reasoning benchmarks (\textsc{GPQA-Diamond}), and implicit knowledge retrieval benchmark (\textsc{StartegyQA}) PoLR shows up to 60\% token reduction and 50\% latency savings without accuracy loss, consistent across LLM families and scales (1–32B params).  
    \item PoLR is the first inference-time method to exploit prefix consistency for Self-Consistency, complementing existing adaptive self-consistency methods further reducing the token generation.
    \item PoLR is robust to different clustering methods, prefix lengths, and cluster selection strategies. 
\end{itemize}

\section{Do Prefixes Encode Early Consensus?}
\label{sec:prelim-analysis}
\begin{wraptable}[11]{r}{0.4\textwidth}
\caption{Preliminary analysis on \textsc{Math500} and \textsc{GSM8K} (DSQ7B, 40 samples). }
\centering
\begin{adjustbox}{width=0.8\linewidth}
\begin{tabular}{lcccc}
\toprule
 & $L_p$ & Expansion rate & Accuracy & EPM \\
\midrule
\multirow{5}{*}{\textsc{Math500}}
& SC  & 1.00 & 89.8 & --  \\ \cmidrule(lr){2-5}
& 32  & 0.64 & 89.8 & 125 \\
& 64  & 0.58 & 89.6 & 63  \\
& 128 & 0.48 & 89.2 & 5   \\
& 256 & 0.45 & 89.2 & 0   \\
\midrule
\multirow{5}{*}{\textsc{GSM8K}}
& SC  & 1.00 & 79.7 & --  \\  \cmidrule(lr){2-5}
& 32  & 0.52 & 79.7 & 135 \\
& 64  & 0.49 & 78.9 & 80  \\
& 128 & 0.47 & 79.3 & 30   \\
& 256 & 0.46 & 79.1 & 1   \\
\bottomrule
\end{tabular}
\end{adjustbox}
\label{tab:analysis_teaser}
\end{wraptable}

To understand if prefixes encode strong signals about the eventual solution, we conduct a preliminary analysis of reasoning prefixes. We generate $40$ traces per question using \textbf{DeepSeek-Distill-Qwen-7B} (DSQ7B) on \textsc{Math500} and \textsc{GSM8K}, truncating traces at varying prefix lengths $L_p$ in Table \ref{tab:analysis_teaser}. We evaluate the fraction of traces that shares identical prefix $L_p$ (\textbf{Expansion rate}) and compute majority-vote (\textbf{accuracy}). We also find the \textbf{exact prefix match (EPM)} that is number of problems where all $40$ traces share identical prefixes. The results show that traces sharing the same prefix achieve nearly the same accuracy as full SC, meaning a large fraction of token cost spent on extra traces rarely contribute to the final answer.
These findings suggest that LLMs encode structural agreement well before generating complete answers. Detecting and leveraging this early consensus can substantially reduce compute without compromising SC’s robustness.

\section{Path of Least Resistance (PoLR)}
\label{sec:polr-method}

In this section, we present how \emph{PoLR (Path of Least Resistance)} operates as an
inference-time alternative to Self-Consistency (SC). 
The name PoLR is motivated by a natural principle: systems tend to follow the \emph{path of least resistance}, avoiding unnecessary detours while conserving energy. 
Analogously, instead of fully expanding all reasoning traces like SC, PoLR prunes unlikely or redundant paths early and allocates computation only to the dominant prefix cluster.

\subsection{Mathematical Formulation}

\paragraph{Setup.}
Consider an input reasoning question $x$ posed to a language model $\mathcal{M}$.
In the standard SC baseline, we sample $N$ complete reasoning traces of average length 
$\ell_f$ and then aggregate their final answers by majority vote.
While effective, this incurs high inference cost because every trace must be expanded
until completion, even if most are redundant.

PoLR modifies this pipeline by introducing a prefix-based selection step.
Specifically, we sample $N$ short prefixes, each of length $L_p$ tokens, and use their
semantic structure to decide which subset of traces to fully expand.
The key intuition is that reasoning traces often share overlapping initial steps \citep{ji2025first},
and these early structures correlate with the correctness of the final outcome. 
By clustering prefixes, we can identify the dominant reasoning mode early, without
generating all traces to completion. A step-by-step implementation instructions presented in Appendix \ref{appendix:algo} (Algorithm \ref{alg:polr}).

\paragraph{Step 1: Prefix Sampling.}
Given the input question $x$ and the $\mathcal{M}$, we first generate $N$ short reasoning prefixes $    p_i = \text{Prefix}(\mathcal{M}(x, t_i), L_p), \quad i = 1, \dots, N, $
where $t_i$ is the sampling temperature, and 
$\text{Prefix}(\cdot)$ denotes truncating the LLM output to $L_p$ tokens.
In practice, this is implemented by setting \texttt{max\_new\_tokens = $L_p$}. 

\paragraph{Step 2: Embedding and Clustering.}
Each prefix $p_i$ is embedded into a sparse vector representation via 
\emph{TF–IDF bag-of-words encoding} over tokens.
This choice is lightweight, model-agnostic, and CPU-friendly, avoiding
external neural encoders. We provide a detailed comparison with neural encoder in Table \ref{tab:embedding_ablation}. It is evident that neural encoders increase the clustering overhead way more than the TF-IDF encoders with diminishing returns on the accuracies.  

We cluster $\{p_i\}$ into $\mathcal{C} = \{C_1, \dots, C_m\}$ using 
\emph{Agglomerative Hierarchical Clustering} with cosine similarity.  
This is well-suited for small $N$ ($11$–$51$), as it requires no pre-specified $m$
and produces interpretable groupings.  That is $C^* = \arg\max_{C_j \in \mathcal{C}} |C_j|,$ where $\bigcup_{j=1}^m C_j = \{p_1, \dots, p_{N}\}$ and $C^*$ is the dominant cluster.

\paragraph{Step 3: Expansion.}
We then expands all $K$ prefixes from $C^*$ to full reasoning traces as
    $r_k = \mathcal{M}(x \mid p_k), \quad p_k \in C^*, \quad k = 1, \dots, K.$

\paragraph{Step 4: Self-Consistency Voting.}
Let $a_k$ be the extracted answer from trace $r_k$. PoLR returns  $\hat{a} = \arg\max_{y} \sum_{k=1}^K \mathbf{1}[a_k = y].$
Thus PoLR strictly generalizes SC: if $K=N$ and clustering is bypassed, 
it reduces to standard SC.

\paragraph{Token Efficiency:} Let $\ell_p$ = average prefix length, $\ell_f$ = full reasoning length. Number of tokens generated for SC $T_{\text{SC}} = N \cdot \ell_f,$ and for PoLR $T_{\text{PoLR}} = N \cdot \ell_p + K \cdot (\ell_f-\ell_p).$ We compute the token efficiency as:
\begin{equation*}
    \eta = 1 - \frac{T_{\text{PoLR}}}{T_{\text{SC}}}
          = 1 - \frac{N \cdot \ell_p + K \cdot(\ell_f-\ell_p)}{N \cdot \ell_f}.
\end{equation*}

\subsection{Theoretical Justification}

PoLR relies on the intuition that early prefixes already contain useful signals
about the final reasoning trajectory. We formalize this intuition by considering
two complementary properties: (i) correctness alignment, which determines whether
restricting to a dominant cluster preserves accuracy, and (ii) structural skew,
which governs the magnitude of efficiency gains.

\subsubsection{Correctness Alignment and Accuracy Preservation}

Let $Y \in \{0,1\}$ denote the correctness of a final reasoning trajectory
($1$ if correct, $0$ otherwise), and let $Z$ denote the cluster assignment of
a sampled prefix. The critical condition for PoLR is that $Z$ carries
information about $Y$, i.e. $I(Z;Y) > 0,$
where $I(\cdot;\cdot)$ denotes mutual information. Intuitively, if prefixes
cluster in a way that is at least weakly predictive of correctness, then
restricting expansion to the dominant cluster will not systematically degrade
accuracy. In this view, self-consistency (SC) can be seen as an unbiased estimator
of $\mathbb{E}[Y]$, while PoLR acts as a variance-reduced estimator that focuses
on high-probability clusters.

Formally, the conditional entropy of correctness given the cluster assignment
can be written as $H(Y|Z) = \sum_{z} P(Z=z) H(Y|Z=z).$
If $H(Y|Z)$ is small, then cluster identity reliably predicts correctness.
Our empirical results (Section~\ref{sec:str}) show that $I(Z;Y)$ and
$H(Y|Z)$ remain non-trivial across models, which explains why PoLR
consistently matches SC in accuracy.

\subsubsection{Structural Skew and Efficiency}

While correctness alignment governs accuracy preservation, our experiments
reveal that it does not explain the magnitude of efficiency gains.
Instead, efficiency is driven by the \emph{structural skew} in the prefix
cluster distribution. Define the skew for a given instance as
$\kappa = \frac{|C^*|}{N},$
where $|C^*|$ is the size of the dominant cluster and $N$ is the number of
sampled prefixes. If $\kappa$ is large, the majority of prefixes fall into one
cluster, and ignoring the smaller clusters eliminates substantial redundant
expansions. Conversely, if clusters are balanced ($\kappa \approx 1/m$), dominant cluster's traces' expansion
yields more token savings but poorer quality.

At the dataset level, the expected efficiency gain is thus directly tied to the
expected skew $\mathbb{E}[\eta] \propto \mathbb{E}[\kappa^{-1}].$ 
Empirically, we observe strong correlation between $\kappa$ and token savings,
whereas NMI between clusters and correctness is weakly correlated with
efficiency. This indicates that PoLR’s efficiency derives from structural
dominance rather than correctness alignment.

The combined picture is as follows:
\begin{itemize}
    \item \textbf{Accuracy preservation} requires that $I(Z;Y) > 0$, i.e., that
    clusters are not adversarially misaligned with correctness. Even modest
    alignment is sufficient, as SC’s voting ensures that errors do not amplify.
    \item \textbf{Efficiency magnitude} depends on structural skew $\kappa$:
    the more dominant the largest cluster, the more redundant expansions PoLR
    can safely ignore.
\end{itemize}

This separation of concerns reconciles our theory and empirical findings:
mutual information guarantees safety, while skew determines savings.
Our experiments across GSM8K with multiple models (1.5B–7B) confirm this in Section \ref{sec:nmi}, where
NMI remains low ($\leq 0.18$), yet efficiency saturates around $50$--$58\%$,
precisely because prefix clusters exhibit strong structural skew.

We now make the connection between cluster skew and efficiency gains explicit.

\begin{proposition}
\label{the:prop}
Let $N$ denote the number of sampled prefixes, partitioned into $m$ clusters
$\{C_1, \dots, C_m\}$ with sizes $|C_1|, \dots, |C_m|$, and let $C^*$ denote
the dominant cluster with size $|C^*|$. Assume PoLR expands $K$ continuations
from $C^*$, while Self-Consistency (SC) expands $M$ continuations from all
$N$ prefixes (with $M \geq K$). Then the expected token efficiency gain of
PoLR relative to SC satisfies
\[
\eta \;\; \geq \;\; 1 - \frac{K}{M} \cdot \kappa^{-1}, \text{where } \kappa = \tfrac{|C^*|}{N} \text{ is the dominance ratio (skew)}.
\]
\end{proposition}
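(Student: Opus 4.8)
The plan is to turn the statement into a one-line inequality about token counts and verify it by elementary algebra, reusing the accounting from the ``Token Efficiency'' paragraph. I would take $T_{\text{SC}} = N\ell_p + M(\ell_f-\ell_p)$ for a run of SC that expands $M$ of its $N$ prefixes (so $M=N$ recovers $T_{\text{SC}}=N\ell_f$) and $T_{\text{PoLR}} = N\ell_p + K(\ell_f-\ell_p)$, where $K$ is the number of dominant-cluster prefixes expanded. Since $\eta = 1 - T_{\text{PoLR}}/T_{\text{SC}}$, the claim $\eta \ge 1 - \tfrac{K}{M}\kappa^{-1}$ is equivalent to $T_{\text{PoLR}} \le \tfrac{K}{M}\kappa^{-1}\, T_{\text{SC}}$, so the whole argument is a comparison of these two quantities.

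The key steps, in order, are: (i) substitute $\kappa = |C^*|/N$, i.e.\ $|C^*| = \kappa N$, and use the defining property of PoLR that it expands only prefixes from the dominant cluster, giving $K \le |C^*| = \kappa N$; (ii) rewrite $T_{\text{PoLR}} = (N-K)\ell_p + K\ell_f$ and bound the ``sunk'' prefix term using $\ell_p \le \ell_f$, obtaining $T_{\text{PoLR}} \le (N-K)\ell_f + K\ell_f = N\ell_f$; (iii) lower-bound the denominator by $T_{\text{SC}} \ge M\ell_f$ (immediate from $N\ge M$ in the expression for $T_{\text{SC}}$); and (iv) combine (ii)--(iii) with $K = |C^*|$, so that $\tfrac{K}{M}\kappa^{-1} = \tfrac{N}{M}$ and hence $T_{\text{PoLR}} \le N\ell_f = \tfrac{N}{M}\,(M\ell_f) \le \tfrac{K}{M}\kappa^{-1}\,T_{\text{SC}}$, which rearranges to the stated bound. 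I would then remark that $M \ge K$ is what makes the right-hand side a non-degenerate efficiency gain, and that $K = N$, $\kappa = 1$ collapses the bound to the SC baseline $\eta \ge 0$, consistent with ``PoLR strictly generalizes SC.''

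The step I expect to be the main obstacle is controlling the prefix-sampling overhead $N\ell_p$: because all $N$ prefixes are produced regardless of which cluster dominates, this term is insensitive to the skew, so a naive estimate certifies only a gain of order $K/N$ rather than the sharper $\tfrac{K}{M}\kappa^{-1} = \tfrac{KN}{M|C^*|}$. The way around it is to note that a large $\kappa$ forces $|C^*| = \kappa N$ to be large, so the constraint $K \le |C^*|$ already hides one factor of $\kappa^{-1}$, while $\ell_p \le \ell_f$ (in practice $\ell_p \ll \ell_f$, prefixes being $32$--$256$ tokens against far longer full traces) makes the leftover prefix cost lower-order; if one instead keeps $K$ strictly below $|C^*|$, the clean bound requires working in this $\ell_p \ll \ell_f$ regime, where $T_{\text{PoLR}} \approx K\ell_f$ and the inequality holds with slack because $\kappa^{-1}\ge 1$. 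Making this domination precise, rather than just bounding the overhead crudely, is the only non-routine point; the remaining manipulations are straightforward.
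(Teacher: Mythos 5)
Your argument is essentially correct for the case the algorithm actually uses, but it takes a genuinely different route from the paper's. The paper's sketch never touches token counts at all: it counts \emph{continuations}, argues that a proportional allocation of SC's $M$ expansions gives the dominant cluster $M\kappa$ of them, and obtains $\tfrac{K}{M}\kappa^{-1}$ as the relative cost by ``normalizing'' within that cluster; the prefix-sampling overhead $N\ell_p$ never appears, and in expansion-count terms the inequality reduces to the trivial observation that $K/M \le \tfrac{K}{M}\kappa^{-1}$ because $\kappa \le 1$. You instead work directly with the token formulas $T_{\mathrm{PoLR}} = N\ell_p + K(\ell_f-\ell_p)$ and a natural $M$-continuation extension of $T_{\mathrm{SC}}$, which is more faithful to the paper's own definition of $\eta$ but exposes exactly the issue you flag: the chain $T_{\mathrm{PoLR}} \le N\ell_f \le \tfrac{N}{M}\,T_{\mathrm{SC}}$ meets the target $\tfrac{K}{M}\kappa^{-1}T_{\mathrm{SC}} = \tfrac{KN}{M|C^*|}\,T_{\mathrm{SC}}$ only when $K = |C^*|$, and for $K < |C^*|$ you must retreat to the $\ell_p \ll \ell_f$ regime. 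Since Step~3 of PoLR expands \emph{all} prefixes of $C^*$ (so $K=|C^*|$ in the method as stated), your proof covers the actual algorithm, and your diagnosis that the skew-independent overhead $N\ell_p$ is the real obstruction is sharper than anything in the paper's sketch, which silently equates token cost with the number of expanded continuations. The one thing to make explicit if you write this up is the standing assumption $M\le N$ (needed for your step~(iii)), which the proposition states only implicitly through ``$M$ continuations from all $N$ prefixes.''
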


\begin{proof}[Sketch]
SC requires expanding $M$ continuations distributed across all $N$ prefixes.
If $m$ clusters are expanded proportionally, each prefix contributes on average
$m/N$ expansions. PoLR instead expands only $K$ continuations from the dominant
cluster $C^*$. Normalizing by $M$, the relative cost is
$\tfrac{K}{M} \cdot \tfrac{N}{|C^*|} = \tfrac{K}{M} \cdot \kappa^{-1}$.
Thus the efficiency gain relative to SC is at least
$1 - \tfrac{K}{M} \cdot \kappa^{-1}$. Equality holds when expansions are exactly
proportional across clusters.
\end{proof}

This bound formalizes the empirical observation that efficiency gains scale
monotonically with $\kappa$: the more dominant the largest cluster, the more
redundant expansions can be ignored. 

\section{Main Experiments}

\paragraph{Backbone LLMs.}  
We evaluate the efficiency and generality of \emph{PoLR -- Path of Least Resistance} across diverse open-source LLMs spanning different architectures, scales, and training paradigms. Specifically, we use \textbf{DeepSeek-R1-Distill-Qwen (DSQ)} (7B, 1.5B) \citep{guo2025deepseek}, distilled from reasoning-specialized LLMs; \textbf{QWQ32B} \citep{qwq32b, qwen2.5}, a Qwen2.5 variant trained with reinforcement learning for problem solving; \textbf{MiMo-7B-RL-0530} \citep{coreteam2025mimounlockingreasoningpotential} and \textbf{Phi-4-15B} \cite{abdin2025phi}, an open-source GPT-style model trained with large-scale supervised data; and \textbf{Qwen2.5-Math-7B} \citep{yang2024qwen25mathtechnicalreportmathematical}, a math-specialized instruction-tuned model. These choices cover architectures (Qwen, MiMo, Phi-4, DeepSeek), parameter scales (1.5B–32B), and training paradigms (distillation, RL, supervised fine-tuning).

\paragraph{Benchmarks.}  
We evaluate on multi-step reasoning tasks: \textsc{GSM8K} \citep{cobbe2021training}, grade-school arithmetic word problems; \textsc{Math500} \citep{lightman2023lets}, a set of 500 challenging math problems; \textsc{AIME24/25} \citep{aime2024, aime2025}, high-school olympiad-level math problems; \textsc{GPQA-Diamond} \citep{rein2024gpqa}, a graduate-level STEM reasoning benchmark covering physics, chemistry, and biology; and \textsc{StrategyQA} \citep{geva2021did}, a multi-hop reasoning and implicit knowledge retrieval task. 

\paragraph{Evaluation Metrics.}  
We follow standard metrics from prior reasoning literature: \textbf{Exact Match (EM)} on GSM8K \citep{lighteval}; \textbf{Pass@1} on Math500 and AIME24/25; \textbf{Accuracy} (binary correctness) on GPQA-Diamond. We also measure \textbf{Token Efficiency} relative to Self-Consistency (SC): $\eta = 1 - \frac{T_{\text{PoLR}}}{T_{\text{SC}}}$, \textbf{Path Expansion (PExp)} denoting the number of full reasoning traces used for majority voting, and \textbf{PoLR Overhead ($k_t$)}, which includes TF-IDF vectorization and clustering.

\paragraph{Baselines.}  
The primary baseline is \textbf{Self-Consistency (SC)} \citep{wangself}, which samples multiple chain-of-thoughts independently and selects the majority answer. SC is widely adopted as a standard inference-time ensemble for reasoning tasks. We also report single-sample greedy decoding (Chain-of-Thought, CoT) as a lower-bound reference and compare PoLR with \textbf{Adaptive Consistency (AC)} \cite{aggarwal2023let}, and  \textbf{Early-Stopping Self-Consistency (ESC)} \cite{liescape2024}.

All experiments are repeated $10$ times\footnote{For brevity, standard deviations are not included in the main table. All reported gains are statistically significant. Detailed standard deviation values are provided in Appendix~\ref{app:prefix}.}. with different random seeds (sampling order and temperature) and we report mean performance and standard deviation across runs for all metrics: accuracy, EM, Pass@1, token efficiency, and latency. Further hyperparameter details are provided in Appendix~\ref{appendix:hyper}. All PoLR evaluations use $L_p=256$ unless stated otherwise. Empirically, we find that $L_p=256$ achieves a good balance between PoLR accuracy and token efficiency gains.

\paragraph{Main Results.}
Table~\ref{tab:main} presents the performance of \textbf{PoLR} compared to Self-Consistency (SC) across five reasoning benchmarks (\textsc{GSM8K}, \textsc{Math500}, \textsc{AIME24}, \textsc{AIME25}, \textsc{GPQA-Diamond}) and multiple LLM families. The results reveal clear advantages of PoLR. First, PoLR drastically reduces token usage while preserving accuracy. Across all datasets and models, token efficiency $\eta$ typically ranges between \textbf{40--60\%}, effectively cutting token consumption by roughly half. For example, on \textsc{GSM8K} with QWQ32B at $N=51$, PoLR achieves the same accuracy as SC (\textbf{90.8\%}) while using only half the tokens ($\eta=47.6\%$). The additional clustering overhead $k_t$ is minimal, just a few milliseconds, so the savings directly translate into faster inference.  
Second, accuracy is preserved and occasionally improved. Despite discarding up to half of the reasoning paths, PoLR matches SC’s accuracy and sometimes surpasses it. On \textsc{Math500}, for instance, PoLR improves QWQ32B from $91.8\%$ to \textbf{92.0\%} at $N=51$, and DSQ7B from $89.6\%$ to \textbf{89.7\%} at $N=31$. On \textsc{AIME25}, PoLR boosts accuracy by +3 percentage points for DSQ7B ($33.7\% \to \textbf{36.4\%}$) and Phi-4-15B ($32.0\% \to \textbf{36.0\%}$). These gains occur because PoLR emphasizes the dominant, coherent reasoning clusters while filtering out noisy or inconsistent paths.  On few occasions, PoLR downgrades the SC accuracy by small magnitudes except on the \textsc{AIME25} dataset, where for QWQ32B PoLR is $10$ points below SC. This amounts to dropping accuracy on only 3 samples out of total 30 samples in this dataset. In Appendix \ref{app:sec:instance}, we find that these instances are inherently challenging, with even SC succeeding only by a narrow margin.

Finally, PoLR’s benefits are consistent across models, datasets, and trace expansion budgets. On challenging benchmarks like \textsc{GPQA-Diamond}, PoLR improves QWQ32B accuracy from $68.7\%$ to \textbf{70.2\%} at $N=51$, while reducing token usage nearly by half ($\eta=53.8\%$). Even on math-intensive datasets such as \textsc{Math500}, \textsc{AIME24}, and \textsc{AIME25}, PoLR maintains high efficiency and accuracy, demonstrating robustness across reasoning domains and model scales. Additionally, PoLR shows consistent gains over non-math/STEM task, \textsc{StrategyQA}, discussed in Appendix \ref{app:sec:stratergy}.

\begin{table}[]
\centering
\caption{Performance comparison of PoLR versus SC across datasets (\textsc{GSM8K}, \textsc{Math500}, \textsc{AIME24}, \textsc{AIME25}, \textsc{GPQA-Diamond}) and model sizes. The table shows accuracy differences (green = improvement, red = drop), token efficiency $\eta$ (\%), sample size $N$, and PoLR overhead $k_t$ (ms). }
\begin{adjustbox}{width=0.9\linewidth}
\begin{tabular}{cl|rrrr | rrrr | rrrr}
\toprule
\multirow{5}{*}{\begin{tabular}[c]{@{}c@{}}\textsc{GSM8K} \\ (1319)\end{tabular} }  &  \multicolumn{1}{c}{ }  & \multicolumn{4}{c}{DSQ1.5B} & \multicolumn{4}{c}{DSQ7B} & \multicolumn{4}{c}{QWQ32B}  \\ \cmidrule(lr){3-6} \cmidrule(lr){7-10} \cmidrule(lr){11-14}
                              & \multicolumn{1}{c}{$N$}  & SC   & PoLR                        & $\eta(\%)$ & $k_t$ (ms) & SC   & PoLR                        & $\eta(\%)$ & $k_t$ (ms) & SC   & PoLR                        & $\eta(\%)$ & $k_t$ (ms) \\ \cmidrule(lr){2-14}
                               & 51 & 73.2 & {\color[HTML]{009901} 0.0}  & 40.1       & 11.3       & 79.8 & {\color[HTML]{009901} 0.2}  & 26.5       & 5.9        & 90.8 & {\color[HTML]{9A0000} -0.3}  & 47.6       & 11.2       \\
                               & 31 & 73.2 & {\color[HTML]{009901} 0.0}  & 41.1       & 6.0        & 80.0 & {\color[HTML]{9A0000} -0.4} & 27.2       & 4.3        & 90.9 & {\color[HTML]{9A0000} -0.6} & 48.6       & 5.8        \\
        & 11 & 72.6 & {\color[HTML]{009901} 0.0}  & 43.7       & 2.3        & 79.7 & {\color[HTML]{009901} 0.0}  & 28.1       & 1.9        & 90.6 & {\color[HTML]{9A0000} -1.3} & 54.2       & 2.4       \\ \midrule \midrule
\multirow{5}{*}{\begin{tabular}[c]{@{}c@{}}\textsc{Math500} \\ (500)\end{tabular}}   &  \multicolumn{1}{c}{ }   & 
\multicolumn{4}{c}{DSQ1.5B} & \multicolumn{4}{c}{DSQ7B}  & \multicolumn{4}{c}{QWQ32B}  \\ \cmidrule(lr){3-6} \cmidrule(lr){7-10} \cmidrule(lr){11-14}
                              & \multicolumn{1}{c}{$N$}  & SC   & PoLR                        & $\eta(\%)$ & $k_t$ (ms) & SC   & PoLR                        & $\eta(\%)$ & $k_t$ (ms) & SC   & PoLR                        & $\eta(\%)$ & $k_t$ (ms) \\ \cmidrule(lr){2-14}
                               & 51 & 76.2 & {\color[HTML]{9A0000} -0.8} & 52.4       & 6.5        & 89.8 & {\color[HTML]{9A0000} -0.4}  & 48.7       & 7.6        & 91.8 & {\color[HTML]{009901} 0.2}  & 51.8       & 11.2       \\
                               & 31 & 76.4 & {\color[HTML]{009901} 0.0}  & 52.0       & 4.4        & 89.6 & {\color[HTML]{009901} 0.1}  & 48.5       & 5.1        & 91.9 & {\color[HTML]{009901} 0.0}  & 54.2       & 5.7        \\
      & 11 & 75.9 & {\color[HTML]{9A0000} -1.6} & 52.0       & 2.0        & 89.4 & {\color[HTML]{009901} 0.0}  & 48.4       & 2.2        & 91.6 & {\color[HTML]{009901} 0.0}  & 60.5       & 2.2      \\ \midrule \midrule
\multirow{5}{*}{\begin{tabular}[c]{@{}c@{}}\textsc{AIME24} \\ (30)\end{tabular}}      & \multicolumn{1}{c}{ }    & 
\multicolumn{4}{c}{DSQ7B}   & \multicolumn{4}{c}{Phi-4-15B}    & \multicolumn{4}{c}{QWQ32B}  \\ \cmidrule(lr){3-6} \cmidrule(lr){7-10} \cmidrule(lr){11-14}
                              & \multicolumn{1}{c}{$N$}  & SC   & PoLR                        & $\eta(\%)$ & $k_t$ (ms) & SC   & PoLR                        & $\eta(\%)$ & $k_t$ (ms) & SC   & PoLR                        & $\eta(\%)$ & $k_t$ (ms) \\ \cmidrule(lr){2-14}
                               & 51 & 53.3 & {\color[HTML]{9A0000} -6.7} & 50.9       & 6.3        & 50.0 & {\color[HTML]{009901} 3.3}  & 49.5       & 12.1       & 80.0 & {\color[HTML]{009901} 0.0}  & 59.7       & 10.8       \\
                               & 31 & 53.3 & {\color[HTML]{9A0000} -3.0} & 51.5       & 3.3        & 49.7 & {\color[HTML]{009901} 0.3}  & 51.5       & 5.3        & 78.3 & {\color[HTML]{009901} 0.7}  & 61.6       & 6.2        \\
       & 11 & 54.3 & {\color[HTML]{9A0000} -3.3} & 49.8       & 2.0        & 50.3 & {\color[HTML]{9A0000} -5.3} & 58.6       & 2.2        & 78.3 & {\color[HTML]{9A0000} -2.0} & 66.6       & 12.8   \\ \midrule \midrule
\multirow{5}{*}{\begin{tabular}[c]{@{}c@{}}\textsc{AIME25} \\ (30)\end{tabular}}      & \multicolumn{1}{c}{ }    & 
\multicolumn{4}{c}{DSQ7B}   & \multicolumn{4}{c}{Phi-4-15B}    & \multicolumn{4}{c}{QWQ32B}  \\ \cmidrule(lr){3-6} \cmidrule(lr){7-10} \cmidrule(lr){11-14}
                              & \multicolumn{1}{c}{$N$}  & SC   & PoLR                        & $\eta(\%)$ & $k_t$ (ms) & SC   & PoLR                        & $\eta(\%)$ & $k_t$ (ms) & SC   & PoLR                        & $\eta(\%)$ & $k_t$ (ms) \\ \cmidrule(lr){2-14}
                               & 51 & 33.3 & {\color[HTML]{009901} 0.0}  & 48.8       & 7.8        & 33.3 & {\color[HTML]{009901} 0.0}  & 54.7       & 11.0       & 76.7 & {\color[HTML]{9A0000} -10.0}  & 56.8       & 12.3       \\
                               & 31 & 35.3 & {\color[HTML]{009901} 0.0}  & 48.4       & 3.9        & 32.0 & {\color[HTML]{009901} 4.0}  & 54.8       & 5.9        & 75.0 & {\color[HTML]{9A0000} -4.0} & 59.5       & 6.9        \\
      & 11 & 33.7 & {\color[HTML]{009901} 2.7}  & 48.9       & 2.2        & 35.7 & {\color[HTML]{009901} 2.3}  & 60.5       & 2.3        & 74.7 & {\color[HTML]{9A0000} -6.0} & 65.3       & 5.3     \\ \midrule \midrule
\multirow{5}{*}{\begin{tabular}[c]{@{}c@{}}\textsc{GPQA} \\ \textsc{Diamond} \\ (198)\end{tabular}} & \multicolumn{1}{c}{ }    & 
\multicolumn{4}{c}{DSQ7B}   & \multicolumn{4}{c}{MiMo-RL-7B}   & \multicolumn{4}{c}{QWQ32B}  \\ \cmidrule(lr){3-6} \cmidrule(lr){7-10} \cmidrule(lr){11-14}
                              &\multicolumn{1}{c}{$N$} & SC   & PoLR                        & $\eta(\%)$ & $k_t$ (ms) & SC   & PoLR                        & $\eta(\%)$ & $k_t$ (ms) & SC   & PoLR                        & $\eta(\%)$ & $k_t$ (ms) \\ \cmidrule(lr){2-14}
                               & 51 & 57.1 & {\color[HTML]{9A0000} -1.5} & 57.1       & 9.5        & 65.7 & {\color[HTML]{9A0000} -0.5}  & 51.4       & 9.0        & 68.7 & {\color[HTML]{009901} 1.5}  & 53.8       & 17.4       \\
                               & 31 & 55.3 & {\color[HTML]{9A0000} -1.2} & 55.8       & 5.7        & 64.9 & {\color[HTML]{9A0000} -1.2} & 51.7       & 5.9        & 68.2 & {\color[HTML]{009901} 0.0}  & 56.9       & 7.3        \\
 & 11 & 54.1 & {\color[HTML]{9A0000} -1.7} & 55.4       & 2.4        & 64.6 & {\color[HTML]{009901} 0.0}  & 48.9       & 2.5        & 67.9 & {\color[HTML]{009901} 0.0}  & 64.3       & 2.5  \\ \bottomrule
\end{tabular}
\end{adjustbox}
\label{tab:main}
\end{table}
In summary, \textbf{PoLR halves token usage, preserves or improves accuracy, and adds negligible overhead}, offering a practical, training-free approach to make Self-Consistency substantially more efficient for real-world deployment.

\section{Analysis and Discussion}
\label{sec:str}

\subsection{PoLR as a Complement to Adaptive Reasoning.}
\label{sub:complement}
We evaluate whether \emph{PoLR} can improve adaptive inference methods such as Adaptive Consistency (AC) \cite{aggarwal2023let} and Early-stopping Self-Consistency (ESC) on the \textsc{GPQA-Diamond} benchmark. Table~\ref{tab:polr_ac} reports accuracy and the number of path expansions (\textit{PExp}) across three LLMs and different initial sample budgets $N$.  

The results show that PoLR effectively ignores low-quality reasoning paths before applying adaptive methods, reducing the number of expansions required without compromising accuracy. For example, in DSQ7B with $N=31$, PoLR+AC reduces \textit{PExp} from 13.54 (AC alone) to 10.53, while maintaining comparable accuracy (55.56\% vs. 55.20\%). Similar patterns hold across MiMo-RL-7B and QWQ32B, with PoLR+AC and PoLR+ESC consistently lowering path expansions by 31.4\% (on average) while preserving or slightly improving performance.  

These findings indicate that PoLR can serve as an efficient pre-processing step for adaptive reasoning methods. By combining PoLR with adaptive allocation, the hybrid approach achieves substantial computational savings (75\% on average as compared to SC) while retaining solution quality, making it a practical enhancement for inference-time reasoning in large language models. All results report mean accuracies over 10 random trials. Table~\ref{app:tab:comp:gpqa} in Appendix~\ref{app:sec:comp} provides standard errors, showing that PoLR yields more precise results with lower variance. We also conducted the same comparison on \textsc{Math500}, observing consistent patterns (Table~\ref{app:tab:comp:math}, Appendix~\ref{app:sec:comp}), confirming the robustness of PoLR across datasets.

\begin{table}[]
\centering
\caption{PoLR complements Adaptive Consistency (AC) and Early-Stopping Self-Consistency (ESC) on \textsc{GPQA-Diamond}. In the hybrid setting, PoLR ignores low-quality reasoning paths (\textit{PExp}) before adaptive allocation. Results for three LLMs and multiple budgets $N$ show reduced PExp while preserving or improving accuracy. Bold indicates the best result.}

\begin{adjustbox}{width=0.9\linewidth}
\begin{tabular}{l | rrrr | rrrr | rrrr}
\toprule
 \multicolumn{1}{l}{LLMs $\rightarrow$}  & \multicolumn{4}{c}{DSQ7B}                       & \multicolumn{4}{c}{MiMo-RL-7B}                       & \multicolumn{4}{c}{QWQ32B}                       \\ \cmidrule(lr){2-5} \cmidrule(lr){6-9} \cmidrule(lr){10-13}
\multicolumn{1}{l}{N $\rightarrow$}                  & \multicolumn{2}{c}{51} & \multicolumn{2}{c}{31} & \multicolumn{2}{c}{51} & \multicolumn{2}{c}{31} & \multicolumn{2}{c}{51} & \multicolumn{2}{c}{31} \\ \cmidrule(lr){2-3} \cmidrule(lr){4-5} \cmidrule(lr){6-7} \cmidrule(lr){8-9} \cmidrule(lr){10-11} \cmidrule(lr){12-13}
\multicolumn{1}{c}{}                  & Acc        & \textit{PExp}      & Acc        & \multicolumn{1}{r}{\textit{PExp}}      & Acc        & \textit{PExp}      & Acc        &  \multicolumn{1}{r}{\textit{PExp}}       & Acc        & \textit{PExp}      & Acc        & \textit{PExp}      \\ \midrule
CoT                                   & 54.55       & 1.00      & 54.54       & 1.00      & 63.64       & 1.00      & 63.18       & 1.00      & 68.69       & 1.00      & 68.33       & 1.00      \\ \midrule
SC                                    & 57.07      & 51.00     & 55.25      & 31.00     & 65.66      & 51.00     & 64.90      & 31.00     & 69.00       & 51.00     & 68.19       & 31.00     \\
PoLR                                  & 55.56      & 20.79     & 54.04      & 13.01     & 65.16       & 24.12     & 63.74      & 14.62     & 70.20       & 21.83     & 67.80       & 12.58     \\ \midrule

AC                                    & \textbf{56.57}      & 18.05     & 55.20      & 13.54     & \textbf{65.66}      & 13.15     & 64.85      & 10.64     & 69.70      & 10.43     & 68.13      & 9.66      \\
PoLR+AC                             & 55.56      & \textbf{10.58}     & \textbf{55.56}      & \textbf{10.53}     & 65.15      & \textbf{9.70}      & \textbf{65.15}      & \textbf{9.75}      & \textbf{70.71}      & \textbf{8.11}      & \textbf{70.61}      & \textbf{8.05}      \\ \midrule
ESC                                   & \textbf{56.06}      & 24.69     & \textbf{55.81}      & 18.04     & \textbf{65.40}      & 19.00     & \textbf{64.80}      & 14.33     & 68.54      & 16.90     & \textbf{67.58}      & 13.02     \\
PoLR+ESC                              & 54.85      & \textbf{13.49}    & {53.99}      & \textbf{9.54}      & {65.10}     & \textbf{11.79}     & {64.50}      & \textbf{8.93}      & \textbf{69.90}      & \textbf{11.13}     & {67.17}      & \textbf{8.05}  \\ \bottomrule  
\end{tabular}
\end{adjustbox}
\label{tab:polr_ac}
\end{table}

\subsection{Lightweight Prefix Embeddings are Sufficient for PoLR.}
\label{sub:encoder}
We evaluate the effect of different prefix embeddings on PoLR performance. Table~\ref{tab:embedding_ablation} compares lightweight TF--IDF features with dense semantic embeddings from \texttt{tomaarsen/mpnet-base-nli-matryoshka}\footnote{\url{https://huggingface.co/tomaarsen/mpnet-base-nli-matryoshka}} generating 64-dimensional sentence embedding.  

\begin{wraptable}[15]{r}{0.6\textwidth}
\caption{Impact of different embeddings on PoLR accuracy on GSM8K, GPQA-Diamond and Math500 datasets.}
\centering
\begin{adjustbox}{width=\linewidth}
    
\begin{tabular}{cl | r | rrr | rrr}
\toprule
      &  & SC   & \multicolumn{3}{c|}{TF-IDF} & \multicolumn{3}{c}{Matryoshka} \\ 
\cmidrule(lr){3-3} \cmidrule(lr){4-6} \cmidrule(lr){7-9}
& Model   & Acc  & Acc    & $\eta$   & $k_t$  & Acc     & $\eta$    & $k_t$ \\ 
\midrule
\multirow{3}{*}{{\textsc{GSM8K}}} & DSQ1.5B & 73.2 & 73.2   & 40.1     & 11.3   & 73.3    & 38.7      & 219.8   \\
&DSQ7B   & 79.8 &  80.0  &   26.5    &  5.9  & 79.9    & 26.6      & 219.0   \\
&QWQ32B  & 90.8 & 90.8   & 47.6     & 11.2   & 90.8    & 45.8      &  221.4   \\ \midrule
\multirow{3}{*}{\begin{tabular}[c]{@{}c@{}}\textsc{GPQA} \\ \textsc{Diamond}\end{tabular}} &DSQ7B  & 57.1 & 55.6 & 57.1 & 9.5  & 54.0 & 52.4 & 209.3 \\
&MIMO7B & 65.7 & 65.2 & 51.4 & 9.0  & 64.1 & 47.7 & 221.5 \\
&QWQ32B & 68.7 & 70.2 & 53.8 & 17.4 & 70.7 & 54.1 & 218.5 \\ \midrule
\multirow{3}{*}{\textsc{Math500}} &DSQ1.5B &76.2 & 75.4 & 52.4 & 6.5  & 76.2 & 47.5 & 219.7  \\
&DSQ7B   &89.8 & 89.4 & 48.7 & 7.6  & 90.0 & 44.7 & 220.6  \\
&QWQ32B  &91.8 & 92.0 & 51.8 & 11.2 & 91.8 & 51.4 & 218.9  \\
\bottomrule            
\end{tabular}
\end{adjustbox}
\label{tab:embedding_ablation}
\end{wraptable}

The results show that TF--IDF achieves nearly identical accuracy and token efficiency to dense embeddings while incurring dramatically lower clustering overhead (5–11\,ms vs. ~220\,ms). This indicates that lightweight representations are sufficient for PoLR’s prefix clustering: they capture enough structural similarity among reasoning paths. Dense embeddings, while semantically richer, provide little benefit for short prefixes and introduce substantial computational cost. For longer prefixes or highly heterogeneous tasks, denser embeddings may be advantageous, but for the current reasoning benchmarks, lightweight features offer the best trade-off between efficiency and accuracy.

\subsection{Impact of Distance Threshold in Clustering.}
\label{sub:clustering_threshold}
In PoLR, agglomerative clustering is applied to TF--IDF embeddings of reasoning prefixes, with the distance threshold controlling the granularity of clusters.  
To study its effect, we vary the threshold from $0.5$ to $1.0$ while fixing the number of samples at $N=51$, and report accuracy and token efficiency $\eta$ on \textsc{GPQA-Diamond} in Figure \ref{fig:thresh_gpqa}, \textsc{GSM8K} in Figure \ref{fig:thresh_gsm} and \textsc{Math500} in Figure \ref{fig:thresh_math} across different LLMs.  

Across all thresholds, we find that accuracy remains nearly identical to SC, confirming that \textbf{prefix consensus is strong enough that the precise clustering granularity does not affect the final outcome}.  
However, the \textbf{token efficiency is sensitive to the threshold}: lower thresholds lead to tighter clusters, which reduce the size of the dominant cluster and hence the number of traces that need to be expanded. This naturally improves token efficiency.  

The magnitude of efficiency gains also depends on model capacity. Weaker models such as DSQ1.5B show the largest improvements (up to $\sim60\%$ token savings), since they tend to produce more redundant and low-quality traces that can be easily filtered. In contrast, stronger models such as QWQ32B, which generate more diverse yet useful reasoning steps, leave less redundancy to exploit, yielding smaller efficiency gains ($\sim40\%$). This behavior is consistent with the intuition that PoLR benefits most when the model’s reasoning space is noisy and contains many unpromising paths.  We choose a threshold of $1.0$ for our main experiments as it \textbf{strikes a balance between efficiency and accuracy}: at this setting, the accuracy either matches or slightly exceeds SC, while still providing substantial token savings, whereas lower thresholds could marginally improve efficiency but risk fragmenting clusters unnecessarily.

\begin{figure}[t]
\begin{subfigure}{.33\textwidth}
  \centering
  \includegraphics[width=\linewidth]{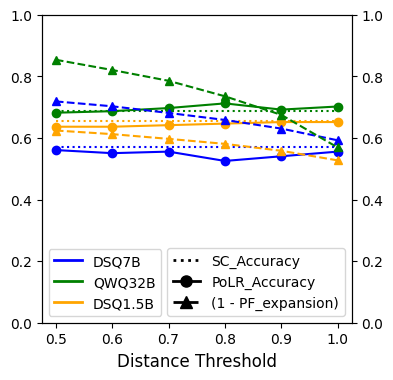}
  \caption{GPQA Diamond}
  \label{fig:thresh_gpqa}
\end{subfigure}%
\begin{subfigure}{.33\textwidth}
  \centering
  \includegraphics[width=\linewidth]{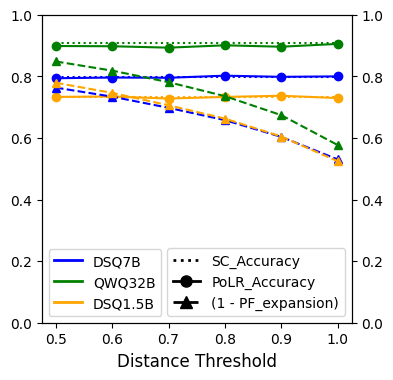}
  \caption{GSM8K}
  \label{fig:thresh_gsm}
\end{subfigure}%
\begin{subfigure}{.33\textwidth}
  \centering
  \includegraphics[width=\linewidth]{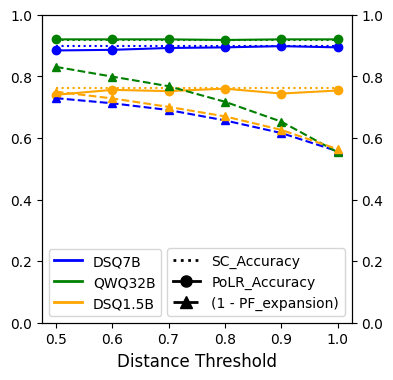}
  \caption{Math500}
  \label{fig:thresh_math}
\end{subfigure}
\caption{Impact of different cluster threshold selection.}
\label{fig:threshold_gsm8k}
\vspace{-0.2in}
\end{figure}

\subsection{PoLR is Robust to Clustering Methods}
\label{sec:dbscan}

\begin{wraptable}[17]{r}{0.65\textwidth}
\caption{Impact of different clustering methods on PoLR.}
\centering
\begin{adjustbox}{width=\linewidth}
\begin{tabular}{l| c | ccc | ccc | ccc}
\toprule
\multicolumn{1}{c}{\multirow{2}{*}{GSM8K}}                  & \multicolumn{1}{c}{SC}   & \multicolumn{3}{c}{Agglomerative} & \multicolumn{3}{c}{DBSCAN} & \multicolumn{3}{c}{HDBSCAN} \\   \cmidrule(lr){2-11}
\multicolumn{1}{c|}{}                  & Acc  & Acc      & $\eta$     & $k_t$     & Acc    & $\eta$   & $k_t$  & Acc    & $\eta$   & $k_t$   \\
DSQ1.5B                               & 73.2 & 73.2     & 40.1       & 11.3      & 72.7   & 56.1     & 12.0   & 73.5   & 61.5     & 12.9    \\
DSQ7B                                  & 79.8 & 80.0     & 26.5       & 05.9       & 80.1   & 36.4     & 06.9    & 80.4   & 33.4     & 07.1     \\
QWQ32B                                & 90.8 & 90.8     & 47.6       & 11.2      & 90.8   & 64.6     & 11.4   & 91.0   & 56.9     & 12.0    \\ \midrule \midrule
\multicolumn{1}{c}{\multirow{2}{*}{GPQA}}                   & \multicolumn{1}{c}{SC}   & \multicolumn{3}{c}{Agglomerative} & \multicolumn{3}{c}{DBSCAN} & \multicolumn{3}{c}{HDBSCAN} \\  \cmidrule(lr){2-11}
\multicolumn{1}{c|}{}                  & Acc  & Acc      & $\eta$     & $k_t$     & Acc    & $\eta$   & $k_t$  & Acc    & $\eta$   & $k_t$   \\
DSQ7B                                 & 57.1 & 55.6     & 57.1       & 09.5       & 56.1   & 69.9     & 10.2   & 53.5   & 71.8     & 11.0    \\
MIMO7B                                & 65.7 & 65.2     & 51.4       & 09.0       & 64.1   & 65.3     & 10.1   & 63.6   & 67.0     & 10.5    \\
QWQ32B                                & 68.7 & 70.2     & 53.8       & 17.4      & 68.2   & 52.2     & 14.9   & 68.2   & 55.2     & 15.3    \\ \midrule \midrule
\multicolumn{1}{c}{\multirow{2}{*}{Math500}}                 & \multicolumn{1}{c}{SC}    & \multicolumn{3}{c}{Agglomerative} & \multicolumn{3}{c}{DBSCAN} & \multicolumn{3}{c}{HDBSCAN} \\  \cmidrule(lr){2-11}
\multicolumn{1}{c|}{}                  & Acc  & Acc      & $\eta$     & $k_t$     & Acc    & $\eta$   & $k_t$  & Acc    & $\eta$   & $k_t$   \\
DSQ1.5B                               & 76.2 & 75.4     & 52.4       & 6.5       & 76.2    & 68.7     & 7.2    & 76.0   & 67.4     & 7.7     \\
DSQ7B                                 & 89.8 & 89.4     & 48.7       & 7.6       & 89.6    & 54.6     & 8.2    & 89.2   & 63.5     & 9.7     \\
QWQ32B                                & 91.8 & 92.0     & 51.8       & 11.2      & 92.0    & 71.3     & 12.6   & 92.2   & 63.1     & 12.6    \\ \bottomrule
\end{tabular}
\end{adjustbox}
\label{tab:cluster_methods}
\end{wraptable}
Table \ref{tab:cluster_methods} shows that the choice of clustering method has minimal impact on accuracy, but strongly affects PoLR’s efficiency metrics. Across \textsc{GSM8K}, \textsc{Math500} and \textsc{GPQA-Diamond} datasets and different model sizes, density-based methods (DBSCAN and HDBSCAN) yield the better token efficiency ($\eta$), indicating more effective reuse of prefix consensus. These gains come with a modest increase in overhead ($k_t$). Larger models such as QWQ32B benefit most, achieving both high accuracy and strong efficiency improvements. Overall, we observed that the \textbf{clustering methods mainly impact the efficiency–overhead trade-off, not the accuracy, showing that the PoLR is robust to different clustering methods.}

\subsection{Effect of Prefix Length on Efficiency and Cluster Structure.}
\label{sec:nmi}
To further understand the role of prefix information, we varied prefix length from 2 to 4096 tokens on \textsc{GSM8K} with different LLMs. Figure \ref{fig:gsm8k_prefix_nmi} shows the resulting efficiency gains, cluster skew, and NMI.  Raw numbers are provided in Appendix \ref{app:nmi_raw}, Table \ref{tabel:gsm8k_prefix_nmi}.

We observe that efficiency improves monotonically with prefix length up to $\sim512$ tokens, achieving $\sim58\%$ token savings over Self-Consistency, after which it saturates. Cluster skew, by contrast, decreases sharply from $\sim0.96$ at length 2 to $\sim0.52$ at 256, stabilizing thereafter. NMI increases slowly with prefix length but remains relatively low ($\sim0.18$ at 4096).

These results highlight two insights that efficiency is primarily governed by structural dominance (skew) rather than correctness alignment (NMI). Even with weak NMI, PoLR achieves substantial token savings whenever a dominant cluster exists. Second, prefix length trades off skew and NMI. Short prefixes yield high skew but low predictiveness; longer prefixes reduce skew while slightly improving correctness alignment. PoLR’s efficiency benefits emerge in the mid-range, when skew remains sufficient for pruning yet prefixes capture more reasoning structure.

We compared PoLR across three LLMs for \textsc{GSM8K} dataset with differing accuracies (73\%–95\%). Across all models, efficiency gains eventually plateau around 50–55\%, but the trajectory differs. Lower-capacity models achieve large savings even at very short prefixes (2–16 tokens), whereas the higher-accuracy Qwen2.5-Math-7B requires longer prefixes (256–512) before efficiency saturates. Importantly, cluster skew consistently predicts efficiency gains, while NMI remains low across all models. This highlights that PoLR’s benefits stem from structural dominance of prefix clusters rather than their correctness alignment, and that model capacity mainly shifts the prefix length required to realize these savings.

\begin{figure}[t]
\begin{subfigure}{.33\textwidth}
  \centering
  \includegraphics[width=\linewidth]{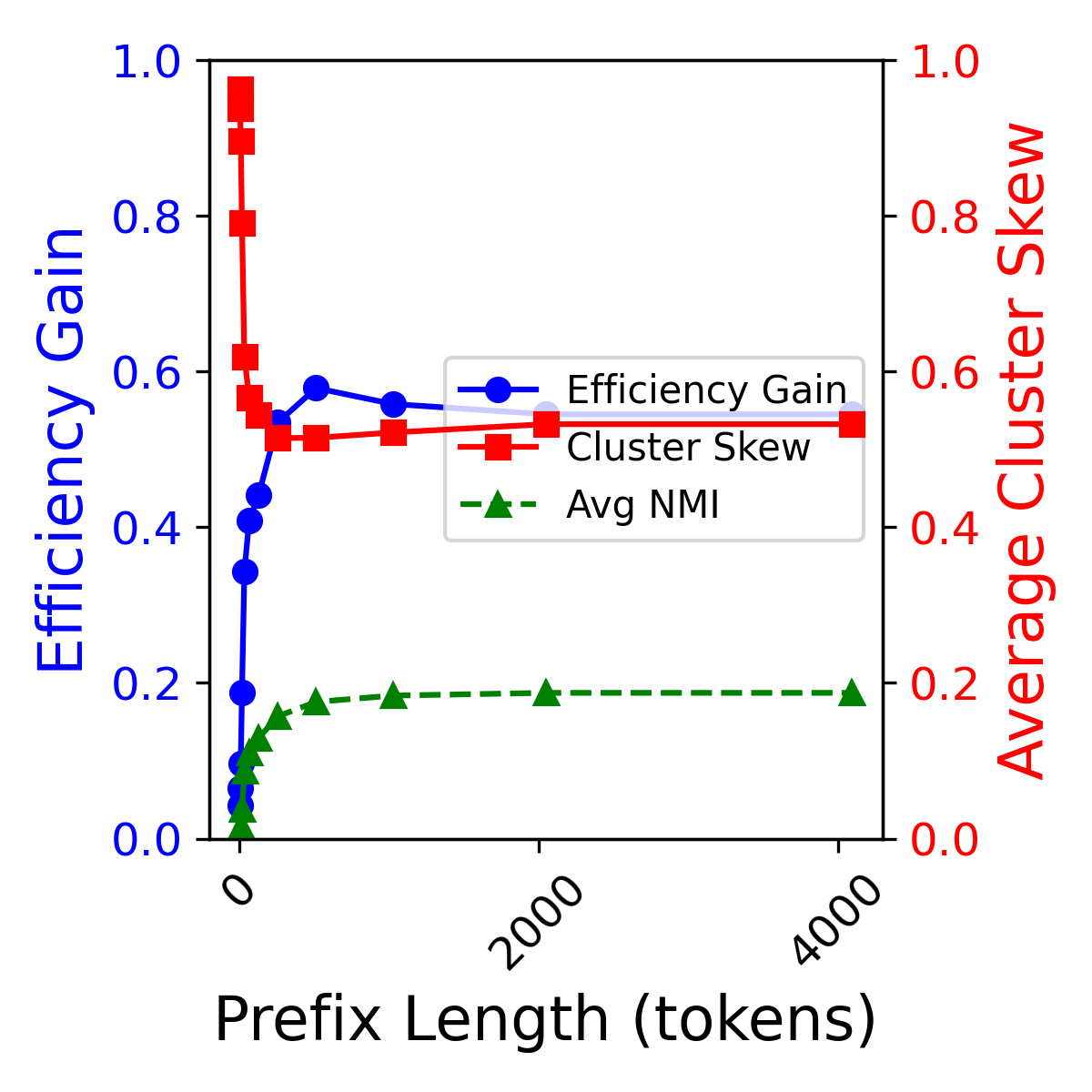}
  \caption{Qwen2.5-math-7B (95\%)}
  \label{fig:gsm8k_prefix_nmi_qm}
\end{subfigure}%
\begin{subfigure}{.33\textwidth}
  \centering
  \includegraphics[width=\linewidth]{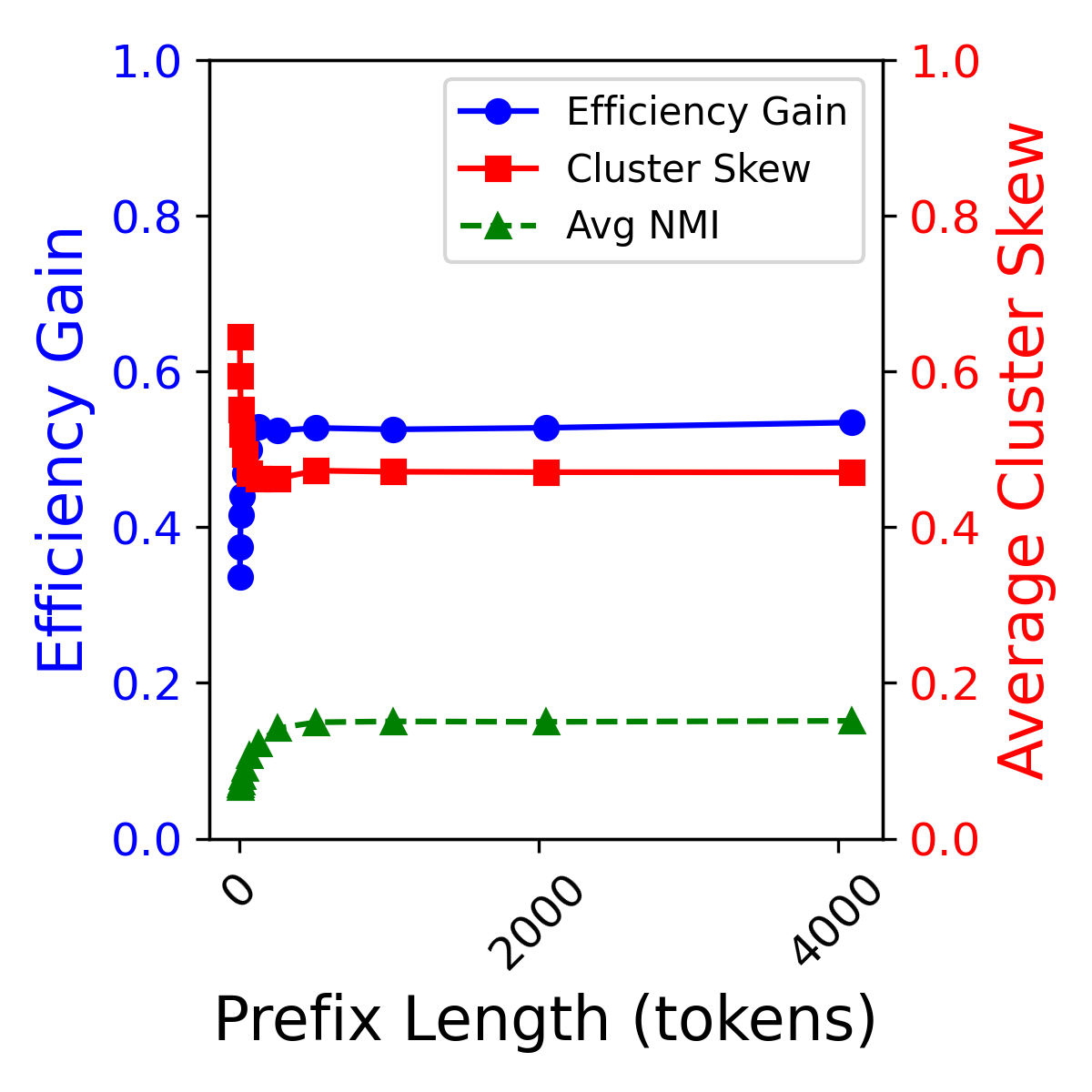}
  \caption{DSQ7B (80\%)}
  \label{fig:gsm8k_prefix_nmi_dsq7}
\end{subfigure}%
\begin{subfigure}{.33\textwidth}
  \centering
  \includegraphics[width=\linewidth]{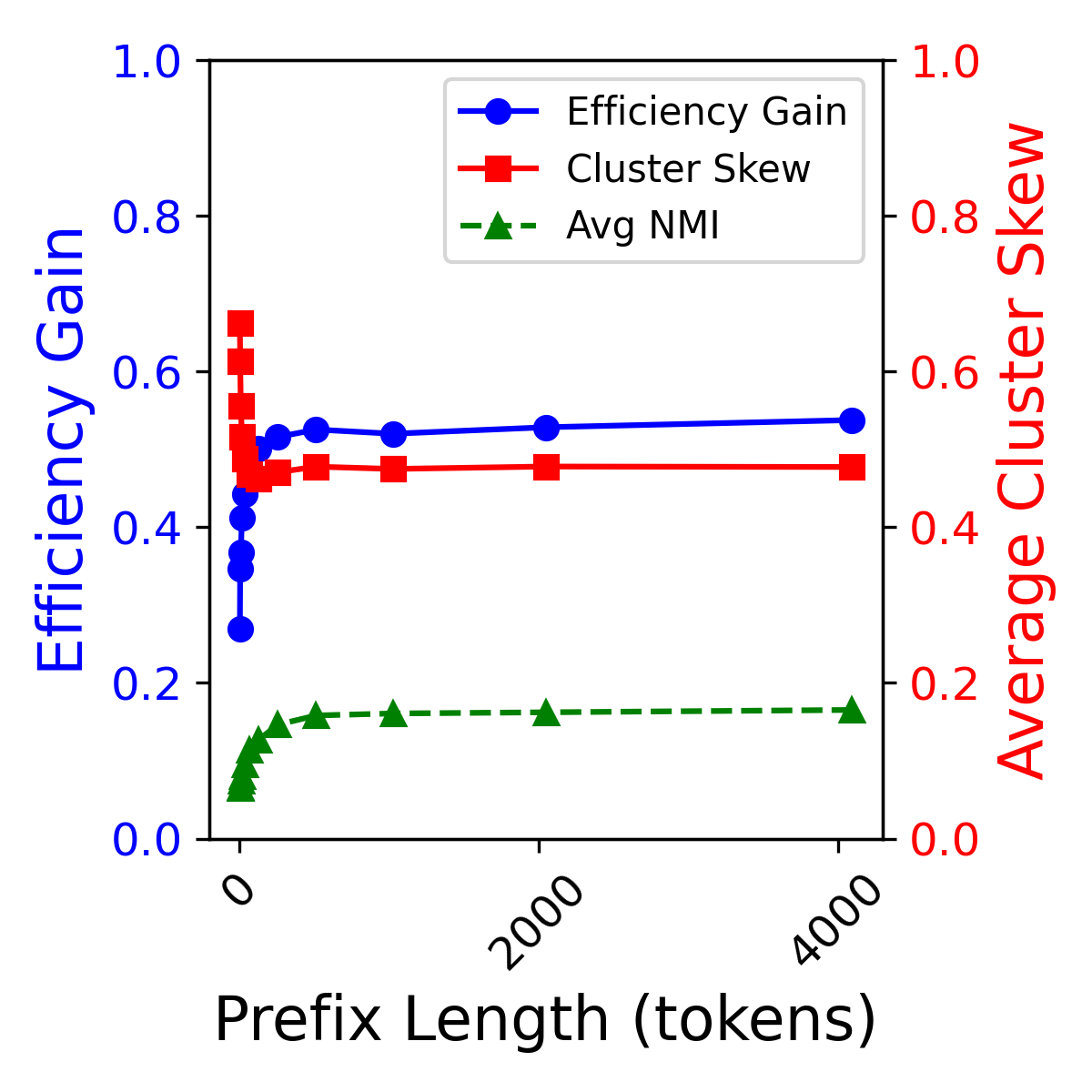}
  \caption{DSQ1.5B (73\%)}
  \label{fig:gsm8k_prefix_nmi_dsq1.5}
\end{subfigure}
\caption{Efficiency gains of PoLR across three models on \textsc{GSM8K} as a function of prefix length.
    All models achieve over 50\% token savings by 256-512 prefix tokens. }
\label{fig:gsm8k_prefix_nmi}
\end{figure}

\subsection{Cluster size $\implies$ Better Accuracy}
\label{sec:cluster_selection}
To further understand which cluster's reasoning traces should be expanded, we perform the 
\begin{wraptable}[8]{r}{0.5\textwidth}
\caption{Correlation coefficient between cluster sizes and accuracy for DSQ7B and QwQ32B ($N=51$, $L_p=256$). }
\centering
\begin{adjustbox}{width=\linewidth}
\begin{tabular}{llllll}
\toprule
                 & {GSM8K} & {MATH500} & {AIME24} & {AIME25} & {GPQA} \\ \midrule
{DSQ-7B}  & 0.90           & 0.89             & 0.75              & 0.76              & 0.90          \\
{QwQ-32B} & 0.90           & 0.88             & 0.87              & 0.78              & 0.78     \\ \bottomrule     
\end{tabular}
\end{adjustbox}
\label{tab:correlation_cardinality}
\end{wraptable}
majority voting for all the formed clusters for all dataset model combinations. 
We observed that the dominant cluster (by cluster size) captures more accurate traces while the remaining clusters tend to have lower accuracies compared to the dominant cluster, second dominant cluster being typically ranking second in accuracy, and so on.

We further find the Pearson correlation coefficient $\rho$ between the cluster sizes and the corresponding accuracies for all the dataset in Table \ref{tab:correlation_cardinality}. We observe a strong correlation for all dataset model combinations $\rho > 0.75$. Therefore, in PoLR, \textbf{cluster size is the best indicator of the cluster to be expanded}. We provide the prefix length-wise dominant cluster accuracy for all dataset model combinations in Figure \ref{fig:app:cardinality} Appendix \ref{app:cardinality}.

We refer the reader to Appendix \ref{app:sec:hyperparmeter} for further analysis of the impact of other hyperparameters on PoLR. 

\section{Related Work}  
We focus on two main directions relevant to our work: (1) methods that utilize answer consistency to reduce inference cost, and (2) methods that exploit early reasoning prefixes.  

\paragraph{Methods Exploiting Answer Consistency}  
Self-Consistency (SC) \cite{wangself} has become a standard approach for improving the reliability of chain-of-thought reasoning by sampling multiple solution paths and selecting the majority answer. Other verifier-based methods include \citet{cobbe2021training,uesato2022solving,yao2023tree}. While effective, SC is inefficient: accuracy improves roughly linearly with the number of samples $N$, but decoding cost scales proportionally, leading to substantial redundancy when many trajectories repeat similar reasoning patterns.  

Several methods aim to mitigate this overhead. Adaptive Consistency (AC) \cite{aggarwal2023let} monitors answer agreement as samples arrive, allocating fewer trajectories to ``easy'' problems where consensus forms quickly. Early-Stopping Self-Consistency (ESC) \cite{liescape2024} halts sampling once a confident majority is detected, avoiding the cost of decoding all $N$ samples. Reasoning-Aware Self-Consistency (RASC) \cite{wan2025reasoning} evaluates reasoning paths based on a set of features and aggregates answers using weighted majority voting after collecting high-quality paths.  

Both AC and ESC reduce compute by relying on answer-level agreement, but they act \emph{after} full trajectories are decoded. In contrast, \textbf{PoLR exploits structural signals much earlier: by clustering prefixes before expansion, it avoids generating redundant modes upfront rather than waiting for consensus at the answer level}. This is crucial because when agreement is delayed or split across modes, AC and ESC still expend tokens unnecessarily, whereas PoLR prevents this overhead entirely. Our experiments show that PoLR and AC are complementary: prefix clustering removes redundant modes, while adaptive stopping controls allocation within the dominant cluster, achieving the strongest efficiency--accuracy trade-offs.  
Moreover, unlike iterative methods, \textbf{PoLR is highly parallelizable}, leading to higher throughput in practice since multiple promising prefixes can be expanded simultaneously without waiting for sequential majority-vote checks.  

\paragraph{Methods Exploiting Early Prefixes}  
Another line of work leverages the predictive power of early prefixes. Path Consistency \cite{zhu2024path} estimates the confidence of partial reasoning paths and guides subsequent generations toward promising branches. In contrast, \textbf{PoLR does not rely on external confidence estimators or guided decoding}. Instead, it clusters multiple short prefixes to capture naturally emerging consensus among independent samples and applies self-consistency voting only within that cluster. This preserves SC’s majority-vote principle while substantially reducing computational cost.  

Similarly, UPFT \cite{ji2025first} shows that prefixes contain rich signals and uses them at \emph{training time} for supervision. \textbf{PoLR transfers this insight to inference}, demonstrating that prefixes can be exploited \emph{unsupervised and training-free} to reduce inference cost while maintaining SC-level accuracy. Orthogonal to prefix-based methods, several trainable approaches iteratively leverage LLM outputs to improve model performance \cite{zelikman2022star,yuan2023scalingrelationshiplearningmathematical}.  

\section{Conclusion}
In this work, we present PoLR, which leverages prefix clustering to drastically reduce reasoning cost while preserving or improving accuracy, providing a training-free, inference-time enhancement to Self-Consistency.

\bibliography{iclr2026_conference}

@article{cobbe2021training,
  title={Training verifiers to solve math word problems},
  author={Cobbe, Karl and Kosaraju, Vineet and Bavarian, Mohammad and Chen, Mark and Jun, Heewoo and Kaiser, Lukasz and Plappert, Matthias and Tworek, Jerry and Hilton, Jacob and Nakano, Reiichiro and others},
  journal={arXiv preprint arXiv:2110.14168},
  year={2021}
}

@article{lightman2023lets,
      title={Let's Verify Step by Step}, 
      author={Lightman, Hunter and Kosaraju, Vineet and Burda, Yura and Edwards, Harri and Baker, Bowen and Lee, Teddy and Leike, Jan and Schulman, John and Sutskever, Ilya and Cobbe, Karl},
      journal={arXiv preprint arXiv:2305.20050},
      year={2023}
}

@inproceedings{jindal-etal-2025-offloaded,
    title = "Offloaded Reasoning: Efficient Inference for Large Language Models via Modular Reasoning and Refinement",
    author = "Jindal, Ishan  and
      Taneja, Jayant  and
      Chandana, Badrinath  and
      Kapur, Vikas  and
      Sharma, Sachin Dev",
    editor = "Christodoulopoulos, Christos  and
      Chakraborty, Tanmoy  and
      Rose, Carolyn  and
      Peng, Violet",
    booktitle = "Findings of the Association for Computational Linguistics: EMNLP 2025",
    month = nov,
    year = "2025",
    address = "Suzhou, China",
    publisher = "Association for Computational Linguistics",
    url = "https://aclanthology.org/2025.findings-emnlp.393/",
    doi = "10.18653/v1/2025.findings-emnlp.393",
    pages = "7450--7458",
    ISBN = "979-8-89176-335-7",
}

@misc{aime2024,
  author = {Art of Problem Solving},
  title = {2024 aime i},
  howpublished = {\url{https://artofproblemsolving.com/wiki/index.php/
2024_AIME_I}},
  year = {2024},
  note = {[Online; accessed 2025]}
}

@inproceedings{aggarwal2023let,
  title={Let’s Sample Step by Step: Adaptive-Consistency for Efficient Reasoning and Coding with LLMs},
  author={Aggarwal, Pranjal and Madaan, Aman and Yang, Yiming and others},
  booktitle={Proceedings of the 2023 Conference on Empirical Methods in Natural Language Processing},
  pages={12375--12396},
  year={2023}
}

@article{geva2021did,
  title={Did aristotle use a laptop? a question answering benchmark with implicit reasoning strategies},
  author={Geva, Mor and Khashabi, Daniel and Segal, Elad and Khot, Tushar and Roth, Dan and Berant, Jonathan},
  journal={Transactions of the Association for Computational Linguistics},
  volume={9},
  pages={346--361},
  year={2021},
  publisher={MIT Press One Rogers Street, Cambridge, MA 02142-1209, USA journals-info~…}
}

@misc{lighteval,
  author = {Habib, Nathan and Fourrier, Clémentine and Kydlíček, Hynek and Wolf, Thomas and Tunstall, Lewis},
  title = {LightEval: A lightweight framework for LLM evaluation},
  year = {2023},
  version = {0.11.0},
  url = {https://github.com/huggingface/lighteval}
}

@article{abdin2025phi,
  title={Phi-4-reasoning technical report},
  author={Abdin, Marah and Agarwal, Sahaj and Awadallah, Ahmed and Balachandran, Vidhisha and Behl, Harkirat and Chen, Lingjiao and de Rosa, Gustavo and Gunasekar, Suriya and Javaheripi, Mojan and Joshi, Neel and others},
  journal={arXiv preprint arXiv:2504.21318},
  year={2025}
}

@misc{yuan2023scalingrelationshiplearningmathematical,
      title={Scaling Relationship on Learning Mathematical Reasoning with Large Language Models}, 
      author={Zheng Yuan and Hongyi Yuan and Chengpeng Li and Guanting Dong and Keming Lu and Chuanqi Tan and Chang Zhou and Jingren Zhou},
      year={2023},
      eprint={2308.01825},
      archivePrefix={arXiv},
      primaryClass={cs.CL},
      url={https://arxiv.org/abs/2308.01825}, 
}

@article{zelikman2022star,
  title={Star: Bootstrapping reasoning with reasoning},
  author={Zelikman, Eric and Wu, Yuhuai and Mu, Jesse and Goodman, Noah},
  journal={Advances in Neural Information Processing Systems},
  volume={35},
  pages={15476--15488},
  year={2022}
}

@article{guo2025deepseek,
  title={Deepseek-r1: Incentivizing reasoning capability in llms via reinforcement learning},
  author={Guo, Daya and Yang, Dejian and Zhang, Haowei and Song, Junxiao and Zhang, Ruoyu and Xu, Runxin and Zhu, Qihao and Ma, Shirong and Wang, Peiyi and Bi, Xiao and others},
  journal={arXiv preprint arXiv:2501.12948},
  year={2025}
}

@article{grattafiori2024llama,
  title={The llama 3 herd of models},
  author={Grattafiori, Aaron and Dubey, Abhimanyu and Jauhri, Abhinav and Pandey, Abhinav and Kadian, Abhishek and Al-Dahle, Ahmad and Letman, Aiesha and Mathur, Akhil and Schelten, Alan and Vaughan, Alex and others},
  journal={arXiv preprint arXiv:2407.21783},
  year={2024}
}

@article{yang2025qwen3,
  title={Qwen3 Technical Report},
  author={Yang, An and Li, Anfeng and Yang, Baosong and Zhang, Beichen and Hui, Binyuan and Zheng, Bo and Yu, Bowen and Gao, Chang and Huang, Chengen and Lv, Chenxu and others},
  journal={arXiv e-prints},
  pages={arXiv--2505},
  year={2025}
}

@article{yao2023tree,
  title={Tree of thoughts: Deliberate problem solving with large language models},
  author={Yao, Shunyu and Yu, Dian and Zhao, Jeffrey and Shafran, Izhak and Griffiths, Tom and Cao, Yuan and Narasimhan, Karthik},
  journal={Advances in neural information processing systems},
  volume={36},
  pages={11809--11822},
  year={2023}
}

@article{uesato2022solving,
  title={Solving math word problems with process-and outcome-based feedback},
  author={Uesato, Jonathan and Kushman, Nate and Kumar, Ramana and Song, Francis and Siegel, Noah and Wang, Lisa and Creswell, Antonia and Irving, Geoffrey and Higgins, Irina},
  journal={arXiv preprint arXiv:2211.14275},
  year={2022}
}

@inproceedings{liescape2024,
  title={Escape Sky-high Cost: Early-stopping Self-Consistency for Multi-step Reasoning},
  author={Li, Yiwei and Yuan, Peiwen and Feng, Shaoxiong and Pan, Boyuan and Wang, Xinglin and Sun, Bin and Wang, Heda and Li, Kan},
  booktitle={The Twelfth International Conference on Learning Representations},
  year={2024}
}

@inproceedings{wan2025reasoning,
  title={Reasoning Aware Self-Consistency: Leveraging Reasoning Paths for Efficient LLM Sampling},
  author={Wan, Guangya and Wu, Yuqi and Chen, Jie and Li, Sheng},
  booktitle={Proceedings of the 2025 Conference of the Nations of the Americas Chapter of the Association for Computational Linguistics: Human Language Technologies (Volume 1: Long Papers)},
  pages={3613--3635},
  year={2025}
}

@misc{aime2025,
  author = {Art of Problem Solving},
  title = {2025 aime i},
  howpublished = {\url{https://artofproblemsolving.com/wiki/index.php/
2025_AIME_I}},
  year = {2025},
  note = {[Online; accessed 2025]}
}

@inproceedings{rein2024gpqa,
  title={Gpqa: A graduate-level google-proof q\&a benchmark},
  author={Rein, David and Hou, Betty Li and Stickland, Asa Cooper and Petty, Jackson and Pang, Richard Yuanzhe and Dirani, Julien and Michael, Julian and Bowman, Samuel R},
  booktitle={First Conference on Language Modeling},
  year = {2024},
}

@article{yang2024qwen25mathtechnicalreportmathematical,
  title={Qwen2.5-Math Technical Report: Toward Mathematical Expert Model via Self-Improvement}, 
  author={An Yang and Beichen Zhang and Binyuan Hui and Bofei Gao and Bowen Yu and Chengpeng Li and Dayiheng Liu and Jianhong Tu and Jingren Zhou and Junyang Lin and Keming Lu and Mingfeng Xue and Runji Lin and Tianyu Liu and Xingzhang Ren and Zhenru Zhang},
  journal={arXiv preprint arXiv:2409.12122},
  year={2024}
}

@misc{qwq32b,
    title = {QwQ-32B: Embracing the Power of Reinforcement Learning},
    url = {https://qwenlm.github.io/blog/qwq-32b/},
    author = {Qwen Team},
    month = {March},
    year = {2025}
}

@article{qwen2.5,
      title={Qwen2.5 Technical Report}, 
      author={An Yang and Baosong Yang and Beichen Zhang and Binyuan Hui and Bo Zheng and Bowen Yu and Chengyuan Li and Dayiheng Liu and Fei Huang and Haoran Wei and Huan Lin and Jian Yang and Jianhong Tu and Jianwei Zhang and Jianxin Yang and Jiaxi Yang and Jingren Zhou and Junyang Lin and Kai Dang and Keming Lu and Keqin Bao and Kexin Yang and Le Yu and Mei Li and Mingfeng Xue and Pei Zhang and Qin Zhu and Rui Men and Runji Lin and Tianhao Li and Tianyi Tang and Tingyu Xia and Xingzhang Ren and Xuancheng Ren and Yang Fan and Yang Su and Yichang Zhang and Yu Wan and Yuqiong Liu and Zeyu Cui and Zhenru Zhang and Zihan Qiu},
      journal={arXiv preprint arXiv:2412.15115},
      year={2024}
}

@misc{coreteam2025mimounlockingreasoningpotential,
      title={MiMo: Unlocking the Reasoning Potential of Language Model -- From Pretraining to Posttraining}, 
      author={LLM-Core-Team Xiaomi},
      year={2025},
      eprint={2505.07608},
      archivePrefix={arXiv},
      primaryClass={cs.CL},
      url={https://arxiv.org/abs/2505.07608}, 
}

@article{rein2023gpqa,
  title={GPQA: A Graduate-Level Google-Proof Q\&A Benchmark},
  author={Rein, David and Hou, Betty Li and Stickland, Asa Cooper and Petty, Jackson and Pang, Richard Yuanzhe and Dirani, Julien and Michael, Julian and Bowman, Samuel R},
  journal={arXiv preprint arXiv:2311.12022},
  year={2023}
}

@article{ji2025first,
  title={The first few tokens are all you need: An efficient and effective unsupervised prefix fine-tuning method for reasoning models},
  author={Ji, Ke and Xu, Jiahao and Liang, Tian and Liu, Qiuzhi and He, Zhiwei and Chen, Xingyu and Liu, Xiaoyuan and Wang, Zhijie and Chen, Junying and Wang, Benyou and others},
  journal={arXiv preprint arXiv:2503.02875},
  year={2025}
}

@article{zhu2024path,
  title={Path-consistency: Prefix enhancement for efficient inference in llm},
  author={Zhu, Jiace and Shen, Yingtao and Zhao, Jie and Zou, An},
  journal={arXiv preprint arXiv:2409.01281},
  year={2024}
}

@article{hendrycks2021measuring,
  title={Measuring mathematical problem solving with the math dataset},
  author={Hendrycks, Dan and Burns, Collin and Kadavath, Saurav and Arora, Akul and Basart, Steven and Tang, Eric and Song, Dawn and Steinhardt, Jacob},
  journal={arXiv preprint arXiv:2103.03874},
  year={2021}
}

@inproceedings{wangself,
  title={Self-Consistency Improves Chain of Thought Reasoning in Language Models},
  author={Wang, Xuezhi and Wei, Jason and Schuurmans, Dale and Le, Quoc V and Chi, Ed H and Narang, Sharan and Chowdhery, Aakanksha and Zhou, Denny},
  booktitle={The Eleventh International Conference on Learning Representations},
  year={2023}
}
\bibliographystyle{iclr2026_conference}

\appendix
\section{Hyperparameter settings}
\label{appendix:hyper}
In this section we provide the hyperparameter settings for PoLR and the other comparative methods. All experiments were conducted using Lighteval\cite{lighteval}, supporting 7,000+ evaluation tasks across multiple domains and languages. We performed all experiments on 4 NVIDIA L40S 48GB memory cards.  We now define the core parameters for each method used in this work. For the comparative methods we used the hyperparameters configurations yielding the best performance. 

\paragraph{\textbf{PoLR}}
\begin{itemize}
    \item top-p=0.9, 
    \item temperature=0.6, 
    \item max-token=32K, 
    \item prefix-Length=256,
    \item clustering parameters: 
        \begin{itemize}
            \item clustering distance threshold=1.0
            \item feature downsampling dim=10
            \item linkage=average,
            \item metric=cosine, 
        \end{itemize}
\end{itemize}

\begin{algorithm}[t]
\caption{Path of Least Resistance (PoLR)}
\label{alg:polr}
\begin{algorithmic}[1]
\Require Question $x$, LLM $\mathcal{M}$, prefix length $L_p$, \#prefixes $N$, \#expansions $K$
\Ensure Final answer $\hat{a}$

\State \textbf{Prefix Sampling:}
\For{$i = 1 \dots N$}
    \State $p_i \gets \text{Prefix}(\mathcal{M}(x, t_i), L_p)$
\EndFor

\State \textbf{Clustering:}
\State Embed prefixes: $e_i \gets \text{Embed}(p_i)$
\State $\mathcal{C} \gets \text{Cluster}(\{e_i\}_{i=1}^{N})$
\State $C^* \gets \arg\max_{C_j \in \mathcal{C}} |C_j|$


\State \textbf{Expansion:}
\State Select $K$ prefixes $\{p_1, \dots, p_K\} \subset C^*$
\For{$k = 1 \dots K$}
    \State $r_k \gets \mathcal{M}(x \mid p_k)$ \Comment{Complete reasoning trace}
    \State $a_k \gets \text{ExtractAnswer}(r_k)$
\EndFor

\State \textbf{Self-Consistency Voting:}
\State $\hat{a} \gets \arg\max_{y} \sum_{k=1}^K \mathbf{1}[a_k = y]$
\State \Return $\hat{a}$

\end{algorithmic}
\end{algorithm}


\paragraph{\textbf{Adaptive Consistency (AC)}} \cite{aggarwal2023let}
\begin{itemize}
    \item top-p=0.9, 
    \item temperature=0.6, 
    \item max-token=32K, 
    \item stop criteria: $\beta-$confidence threshold=0.95, 
\end{itemize}

\paragraph{\textbf{Early-stopping Self-consistency (ESC)}} \cite{liescape2024}
\begin{itemize}
    \item top-p=0.9, 
    \item temperature=0.6, 
    \item max-token=32K, 
    \item window-size=5, 
\end{itemize}

\subsection{LLM Usage}

We used ChatGPT and Claude for grammar reviews and language style polishing. In certain cases we used these models in analyzing and summarizing tables. These summaries are then verified and updated manually for correctness. 

\section{PoLR algorithm}
\label{appendix:algo}
Algorithm \ref{alg:polr} provide the step-by-step instruction to implement PoLR. PoLR is model agnostic and wokrs for any LLM.

\section{PoLR Performance Vs Prefix Lengths}
\label{app:prefix}


Following the structure of Table~\ref{tab:main}, we report mean accuracies and standard deviations for both PoLR and SC across different prefix lengths $L_p \in \{1, 2, 4, 8, 16, 32, 64, 128, 256, 512, 1024, 2048, 4096\}$ and two sampling budgets $N \in \{51, 31\}$ (Figure~\ref{fig:main_fig}).  

Overall, PoLR exhibits remarkable robustness to the initial number of samples: for both $N=51$ and $N=31$, token efficiency follows nearly identical curves. In both settings, we observe a sharp improvement in efficiency once prefix lengths reach the range $128$–$512$, after which efficiency plateaus or declines slightly. The drop at very long prefixes arises because many instances do not require extended prefixes to reach a stable answer—expanding them wastes computation without improving consensus. This trend is consistent across all dataset–LLM pairs we tested.  

In terms of quality, PoLR generally matches or outperforms SC across prefix lengths. The gains are most stable on math and commonsense datasets (e.g., \textsc{GSM8K}, \textsc{Math500}, \textsc{AIME24/25}), where prefix consensus is especially strong. The only exception is \textsc{GPQA-Diamond}, where accuracy drops slightly for longer prefixes. We attribute this to the nature of GPQA problems: they often require multi-step reasoning and  longer prefixes often contains specialized technical terms that leads to less informative lexical overlap between prefixes. Potential solutions could include expanding top-$m$ clusters instead of the dominant cluster or switching to semantic neural embeddings. We leave this for future work.

\begin{figure}
\begin{subfigure}{.33\textwidth}
  \centering
  \includegraphics[width=\linewidth]{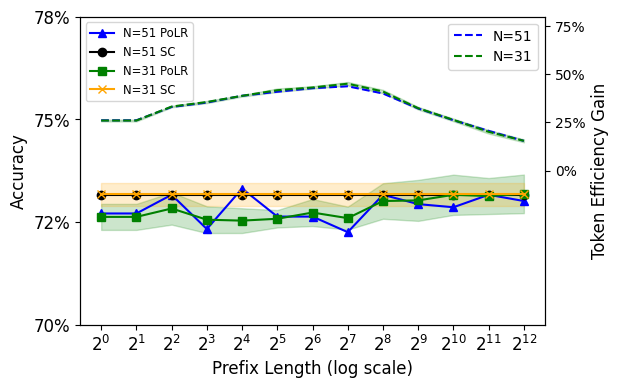}
  \caption{\textsc{GSM8K}: DSQ1.5B}
  \label{fig:sfig1}
\end{subfigure}%
\begin{subfigure}{.33\textwidth}
  \centering
  \includegraphics[width=\linewidth]{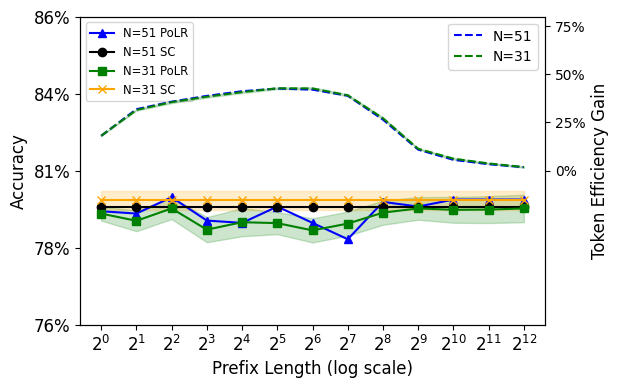}
  \caption{\textsc{GSM8K}: DSQ7B}
  \label{fig:sfig21}
\end{subfigure}%
\begin{subfigure}{.33\textwidth}
  \centering
  \includegraphics[width=\linewidth]{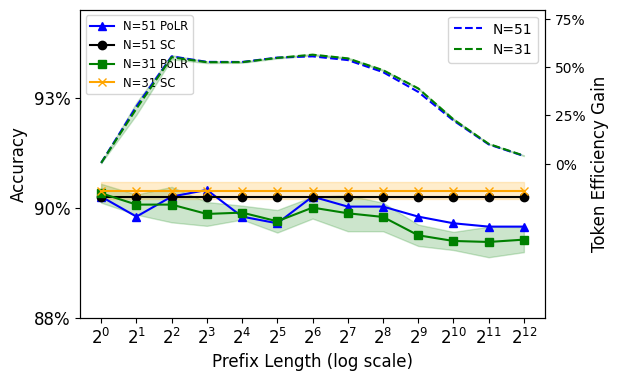}
  \caption{\textsc{GSM8K}: QWQ32B}
  \label{fig:sfig22}
\end{subfigure}\\
\begin{subfigure}{.33\textwidth}
  \centering
  \includegraphics[width=\linewidth]{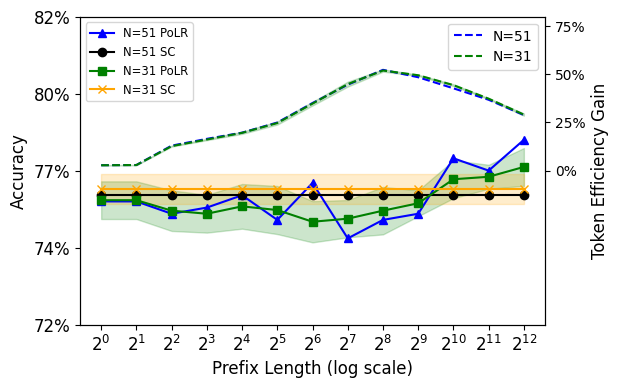}
  \caption{\textsc{Math500}: DSQ1.5B}
  \label{fig:sfig13}
\end{subfigure}%
\begin{subfigure}{.33\textwidth}
  \centering
  \includegraphics[width=\linewidth]{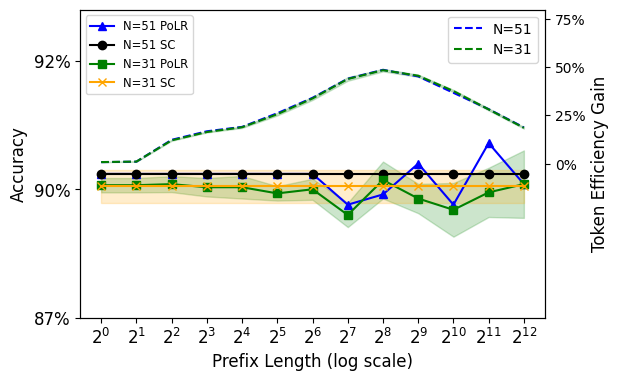}
  \caption{\textsc{Math500}: DSQ7B}
  \label{fig:sfig24}
\end{subfigure}%
\begin{subfigure}{.33\textwidth}
  \centering
  \includegraphics[width=\linewidth]{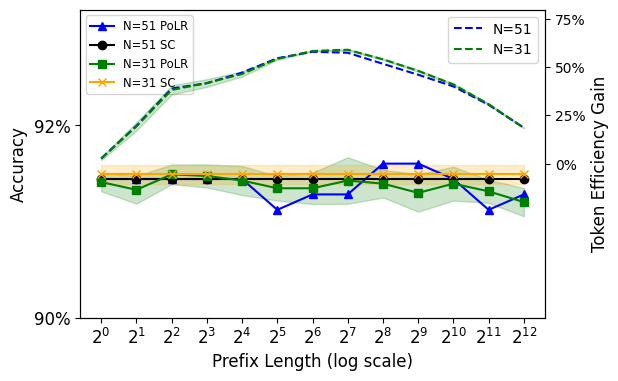}
  \caption{\textsc{Math500}: QWQ32B}
  \label{fig:sfig25}
\end{subfigure}\\
\begin{subfigure}{.33\textwidth}
  \centering
  \includegraphics[width=\linewidth]{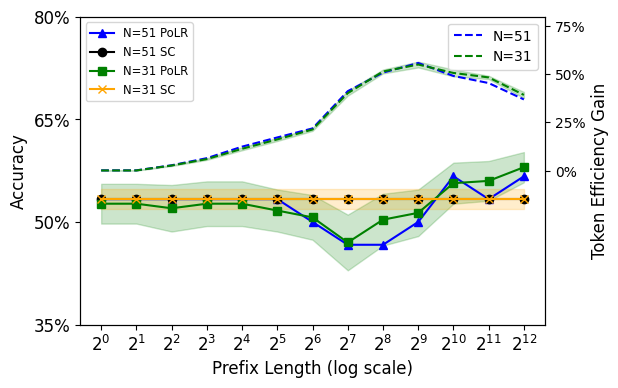}
  \caption{\textsc{AIME24}: DSQ7B}
  \label{fig:sfig16}
\end{subfigure}%
\begin{subfigure}{.33\textwidth}
  \centering
  \includegraphics[width=\linewidth]{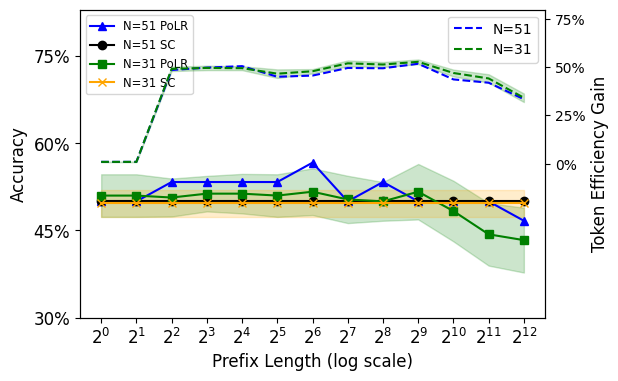}
  \caption{\textsc{AIME24}: Phi-4-15B}
  \label{fig:sfig27}
\end{subfigure}%
\begin{subfigure}{.33\textwidth}
  \centering
  \includegraphics[width=\linewidth]{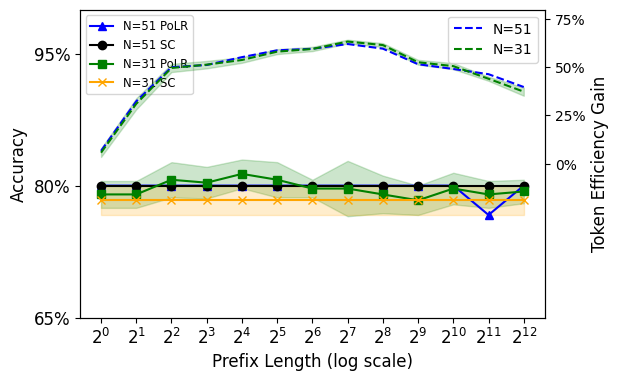}
  \caption{\textsc{AIME24}: QWQ32B}
  \label{fig:sfig28}
\end{subfigure}\\
\begin{subfigure}{.33\textwidth}
  \centering
  \includegraphics[width=\linewidth]{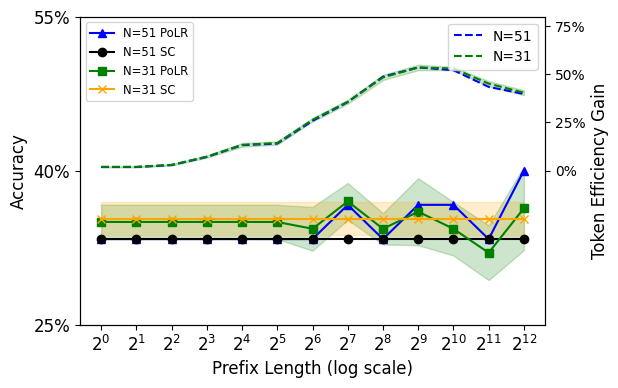}
  \caption{\textsc{AIME25}: DSQ7B}
  \label{fig:sfig19}
\end{subfigure}%
\begin{subfigure}{.33\textwidth}
  \centering
  \includegraphics[width=\linewidth]{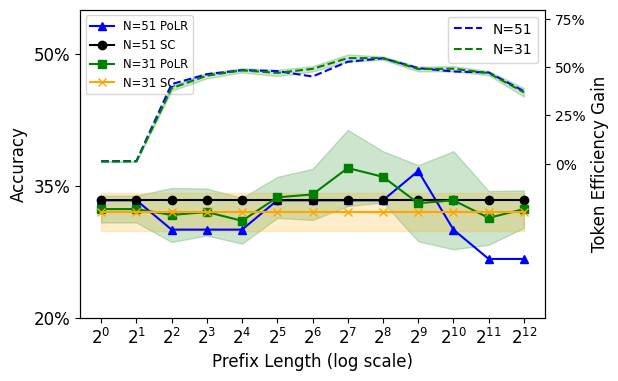}
  \caption{\textsc{AIME25}: Phi-4-15B}
  \label{fig:sfig20}
\end{subfigure}%
\begin{subfigure}{.33\textwidth}
  \centering
  \includegraphics[width=\linewidth]{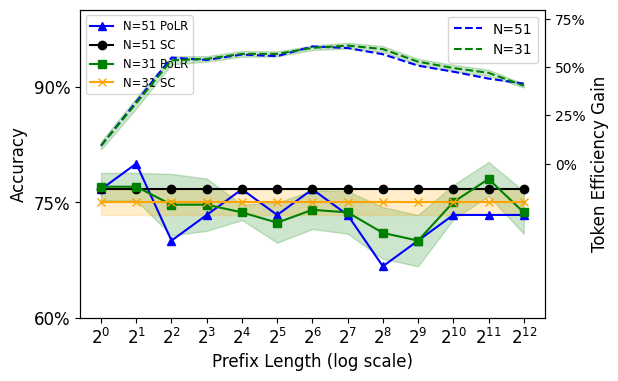}
  \caption{\textsc{AIME25}: QWQ32B}
  \label{fig:sfiga21}
\end{subfigure}\\
\begin{subfigure}{.33\textwidth}
  \centering
  \includegraphics[width=\linewidth]{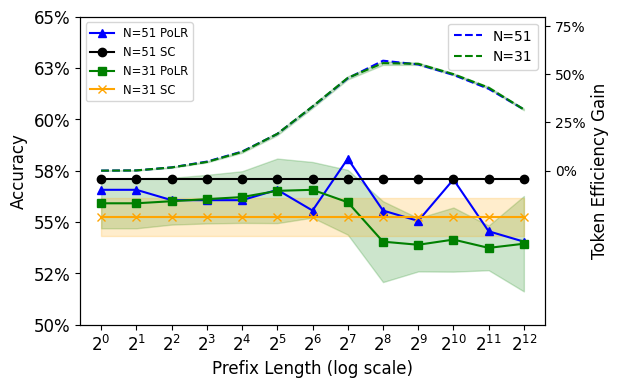}
  \caption{\textsc{GPQA-Diamond}: DSQ7B}
  \label{fig:sfig111}
\end{subfigure}%
\begin{subfigure}{.33\textwidth}
  \centering
  \includegraphics[width=\linewidth]{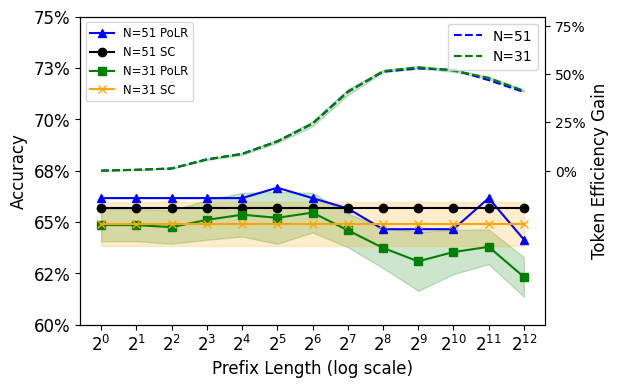}
  \caption{\textsc{GPQA-Diamond}: MiMo-RL-7B}
  \label{fig:sfig212}
\end{subfigure}%
\begin{subfigure}{.33\textwidth}
  \centering
  \includegraphics[width=\linewidth]{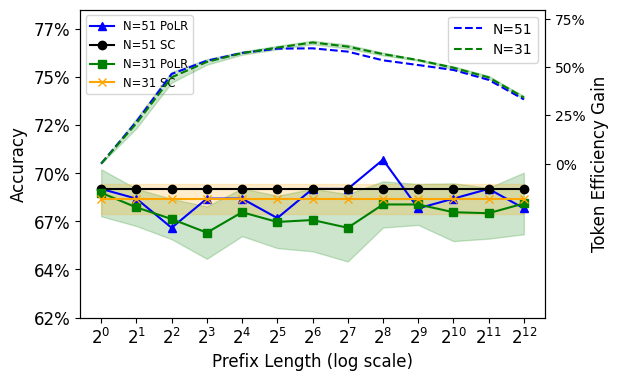}
  \caption{\textsc{GPQA-Diamond}: QWQ32B}
  \label{fig:sfig213}
\end{subfigure}
\caption{Performance comparison of PoLR versus SC across datasets (\textsc{GSM8K}, \textsc{Math500}, \textsc{AIME24}, \textsc{AIME25}, \textsc{GPQA-Diamond}) and model sizes. The table shows accuracy differences (green = improvement, red = drop), token efficiency $\eta$ (\%), and sample size $N$ as a function of different prefix lengths $L_p$.}
\label{fig:main_fig}
\end{figure}


\section{PoLR Comparison with Existing Approaches}
\label{app:sec:comp}

\begin{table}[]
\centering
\caption{PoLR complements Adaptive Consistency (AC) and Early-Stopping Consistency (ESC) on \textsc{GPQA-Diamond}. In the hybrid setting, PoLR prunes low-quality reasoning paths (\textit{PExp}) before adaptive allocation. Results for three LLMs and multiple budgets $N$ show reduced PExp while preserving or improving accuracy.}
\subfloat[DSQ7B]{%
\begin{tabular}{l | rrrr}
\toprule
 \multicolumn{1}{l}{LLMs $\rightarrow$}     & \multicolumn{4}{c}{DSQ7B}             \\ \cmidrule(lr){2-5}
\multicolumn{1}{l}{N $\rightarrow$}                  & \multicolumn{2}{c}{51} & \multicolumn{2}{c}{31}   \\ \cmidrule(lr){2-3} \cmidrule(lr){4-5}
\multicolumn{1}{c}{}                  & Acc        & \textit{PExp}      & Acc        & \multicolumn{1}{r}{\textit{PExp}} \\ \midrule
CoT       & 54.54 $\pm$ 0.00 & 1.00 $\pm$ 0.00     & 54.54 $\pm$ 0.00 & 1.00 $\pm$ 0.00     \\
SC        & 57.07 $\pm$ 0.00 & 51.00 $\pm$ 0.00    & 55.25 $\pm$ 0.93  & 31.00 $\pm$ 0.00    \\
PoLR      & 55.55 $\pm$ 0.00 & 20.79 $\pm$ 0.00 & 54.04 $\pm$ 1.96   & 13.01 $\pm$ 0.15   \\ \midrule
AC        & 56.57 $\pm$ 0.00 & 18.05 $\pm$ 0.00 & 55.20 $\pm$ 0.90   & 13.54 $\pm$ 0.34   \\
PoLR + AC & 55.56 $\pm$ 0.00 & 10.58 $\pm$ 0.00 & 55.56 $\pm$ 0.00 & 10.53 $\pm$ 0.06 \\ \midrule
ESC       & 56.06 $\pm$ 0.78 & 24.69 $\pm$ 0.87 & 55.80 $\pm$ 1.51 & 18.04 $\pm$ 0.63 \\
PoLR+ESC  & 54.84 $\pm$ 0.40  & 13.48 $\pm$ 0.36 & 53.99 $\pm$ 1.72 & 9.53 $\pm$ 0.25   \\ \bottomrule
\end{tabular}}\\
\subfloat[MiMo-RL-7B]{%
\begin{tabular}{l | rrrr}
\toprule
 \multicolumn{1}{l}{LLMs $\rightarrow$}     & \multicolumn{4}{c}{MiMo-RL-7B}               \\ \cmidrule(lr){2-5}
\multicolumn{1}{l}{N $\rightarrow$}                  & \multicolumn{2}{c}{51} & \multicolumn{2}{c}{31}   \\ \cmidrule(lr){2-3} \cmidrule(lr){4-5}
\multicolumn{1}{c}{}                  & Acc        & \textit{PExp}      & Acc        & \multicolumn{1}{r}{\textit{PExp}} \\ \midrule
CoT       & 63.63 $\pm$ 0.00 & 1.00 $\pm$ 0.00     & 63.18 $\pm$ 0.00 & 1.00 $\pm$ 0.00     \\
SC        & 65.65 $\pm$ 0.00 & 51.00 $\pm$ 0.00    & 64.89 $\pm$ 1.08   & 31.00 $\pm$ 0.00    \\
PoLR      & 65.15 $\pm$ 0.00 & 24.11 $\pm$ 0.00 & 63.73 $\pm$ 0.97  & 14.62 $\pm$ 0.14   \\ \midrule
AC        & 65.66 $\pm$ 0.00 & 13.15 $\pm$ 0.00 & 64.84 $\pm$ 1.11   & 10.64 $\pm$ 0.26   \\
PoLR + AC & 65.15 $\pm$ 0.00 & 9.70 $\pm$ 0.00   & 65.15 $\pm$ 0.00 & 9.75 $\pm$ 0.06  \\ \midrule
ESC       & 65.40 $\pm$ 0.60 & 19.01 $\pm$ 0.98 & 64.79 $\pm$ 0.93 & 14.32 $\pm$ 0.57 \\
PoLR+ESC  & 65.10 $\pm$ 0.35 & 11.78 $\pm$ 0.45 & 64.49 $\pm$ 0.84  & 8.93 $\pm$ 0.15   \\ \bottomrule
\end{tabular}}\\
\subfloat[QWQ32B]{%
\begin{tabular}{l | rrrr}
\toprule
 \multicolumn{1}{l}{LLMs $\rightarrow$}     & \multicolumn{4}{c}{QWQ32B}               \\ \cmidrule(lr){2-5}
\multicolumn{1}{l}{N $\rightarrow$}                  & \multicolumn{2}{c}{51} & \multicolumn{2}{c}{31}   \\ \cmidrule(lr){2-3} \cmidrule(lr){4-5}
\multicolumn{1}{c}{}                  & Acc        & \textit{PExp}      & Acc        & \multicolumn{1}{r}{\textit{PExp}} \\ \midrule
CoT       & 68.68 $\pm$ 0.00 & 1.00 $\pm$ 0.00     & 68.33 $\pm$ 0.00 & 1.00 $\pm$ 0.00     \\
SC        & 69.00 $\pm$ 0.00 & 51.00 $\pm$ 0.00    & 68.18 $\pm$ 0.78   & 31.00 $\pm$ 0.00    \\
PoLR      & 70.20 $\pm$ 0.00 & 21.83 $\pm$ 0.00 & 67.87 $\pm$ 1.19   & 12.57 $\pm$ 0.19   \\ \midrule
AC        & 69.70 $\pm$ 0.00  & 10.43 $\pm$ 0.00 & 68.13 $\pm$ 1.02   & 9.65 $\pm$ 0.33    \\
PoLR + AC & 70.71 $\pm$ 0.00 & 8.11 $\pm$ 0.00  & 70.61 $\pm$ 0.20 & 8.05 $\pm$ 0.05  \\ \midrule
ESC       & 68.53 $\pm$ 0.71 & 16.90 $\pm$ 0.67 & 67.57 $\pm$ 0.95 & 13.02 $\pm$ 0.43 \\
PoLR+ESC  & 69.89 $\pm$ 0.68 & 11.13 $\pm$ 0.24 & 67.17 $\pm$ 0.87 & 8.05 $\pm$ 0.12   \\ \bottomrule
\end{tabular}}
\label{app:tab:comp:gpqa}
\end{table}

\begin{table}[]
\centering
\caption{PoLR complements Adaptive Consistency (AC) and Early-Stopping Consistency (ESC) on \textsc{math500}. In the hybrid setting, PoLR prunes low-quality reasoning paths (\textit{PExp}) before adaptive allocation. Results for three LLMs and multiple budgets $N$ show reduced PExp while preserving or improving accuracy.}
\subfloat[DSQ1.5B]{%
\begin{tabular}{l | rrrr}
\toprule
 \multicolumn{1}{l}{LLMs $\rightarrow$}     & \multicolumn{4}{c}{DSQ1.5B}               \\ \cmidrule(lr){2-5}
\multicolumn{1}{l}{N $\rightarrow$}                  & \multicolumn{2}{c}{51} & \multicolumn{2}{c}{31}   \\ \cmidrule(lr){2-3} \cmidrule(lr){4-5}
\multicolumn{1}{c}{}                  & Acc        & \textit{PExp}      & Acc        & \multicolumn{1}{r}{\textit{PExp}} \\ \midrule
CoT       & 73.00 $\pm$ 0.00    & 1.00 $\pm$ 0.00     & 72.88 $\pm$ 0.00 & 1.00 $\pm$ 0.00     \\
SC        & 76.20 $\pm$ 0.00  & 51.00 $\pm$ 0.00    & 76.40 $\pm$ 0.49  & 31.00 $\pm$ 0.00    \\
PoLR      & 75.40 $\pm$ 0.00  & 22.13 $\pm$ 0.00 & 75.70 $\pm$ 0.77  & 13.39 $\pm$ 0.14 \\ \midrule
AC        & 78.60 $\pm$ 0.00  & 12.20 $\pm$ 0.00  & 77.84 $\pm$ 0.45 & 10.19 $\pm$ 0.16 \\
PoLR + AC & 76.20 $\pm$ 0.00  & 8.63 $\pm$ 0.00  & 76.14 $\pm$ 0.09 & 8.67 $\pm$ 0.04  \\ \midrule
ESC       & 76.26 $\pm$ 0.09 & 27.42 $\pm$ 0.58 & 76.22 $\pm$ 0.51 & 19.69 $\pm$ 0.24 \\
PoLR+ESC  & 75.66 $\pm$ 0.23 & 15.83 $\pm$ 0.26  & 76.12 $\pm$ 0.60 & 11.26 $\pm$ 0.15  \\ \bottomrule
\end{tabular}}\\
\subfloat[DSQ7B]{%
\begin{tabular}{l | rrrr}
\toprule
 \multicolumn{1}{l}{LLMs $\rightarrow$}     & \multicolumn{4}{c}{DSQ7B}               \\ \cmidrule(lr){2-5}
\multicolumn{1}{l}{N $\rightarrow$}                  & \multicolumn{2}{c}{51} & \multicolumn{2}{c}{31}   \\ \cmidrule(lr){2-3} \cmidrule(lr){4-5}
\multicolumn{1}{c}{}                  & Acc        & \textit{PExp}      & Acc        & \multicolumn{1}{r}{\textit{PExp}} \\ \midrule
CoT       & 89.20 $\pm$ 0.00  & 1.00 $\pm$ 0.00     & 89.12 $\pm$ 0.00 & 1.00  $\pm$ 0.00     \\
SC        & 89.80 $\pm$ 0.00  & 51.00  $\pm$ 0.00    & 89.56 $\pm$ 0.32 & 31.00  $\pm$ 0.00    \\
PoLR      & 89.40 $\pm$ 0.00  & 22.47 $\pm$ 0.00 & 89.68 $\pm$ 0.36 & 13.71 $\pm$ 0.10 \\ \midrule
AC        & 90.00  $\pm$ 0.00    & 7.07 $\pm$ 0.00  & 89.94 $\pm$ 0.28 & 6.24 $\pm$ 0.08  \\
PoLR + AC & 89.40 $\pm$ 0.00  & 5.69 $\pm$ 0.00  & 89.48 $\pm$ 0.10 & 5.61 $\pm$ 0.04  \\ \midrule
ESC       & 89.82 $\pm$ 0.06 & 14.64 $\pm$ 0.24 & 89.62 $\pm$ 0.24 & 11.84 $\pm$ 0.19 \\
PoLR+ESC  & 89.40 $\pm$ 0.00   & 10.77 $\pm$ 0.11 & 89.68 $\pm$ 0.45 & 9.25 $\pm$ 0.06  \\ \bottomrule
\end{tabular}}\\
\subfloat[QWQ32B]{%
\begin{tabular}{l | rrrr}
\toprule
 \multicolumn{1}{l}{LLMs $\rightarrow$}     & \multicolumn{4}{c}{QWQ32B}               \\ \cmidrule(lr){2-5}
\multicolumn{1}{l}{N $\rightarrow$}                  & \multicolumn{2}{c}{51} & \multicolumn{2}{c}{31}   \\ \cmidrule(lr){2-3} \cmidrule(lr){4-5}
\multicolumn{1}{c}{}                  & Acc        & \textit{PExp}      & Acc        & \multicolumn{1}{r}{\textit{PExp}} \\ \midrule
CoT       & 92.00  $\pm$ 0.00   & 1.00  $\pm$ 0.00     & 91.94 $\pm$ 0.00  & 1.00  $\pm$ 0.00      \\
SC        & 91.80 $\pm$ 0.00 & 51.00  $\pm$ 0.00    & 91.86 $\pm$ 0.13 & 31.00  $\pm$ 0.00     \\
PoLR      & 92.00  $\pm$ 0.00   & 22.78 $\pm$ 0.00 & 91.74 $\pm$ 0.18  & 13.24 $\pm$ 0.08   \\ \midrule
AC        & 92.20 $\pm$ 0.00 & 4.87 $\pm$ 0.00  & 91.74 $\pm$ 0.18  & 4.72 $\pm$ 0.06   \\
PoLR + AC & 92.00  $\pm$ 0.00   & 4.67 $\pm$ 0.00  & 92.00 $\pm$ 0.00  & 4.65 $\pm$ 0.01   \\ \midrule
ESC       & 91.79 $\pm$ 0.00       & 10.66 $\pm$ 0.12 & 91.88 $\pm$ 0.15  & 9.62 $\pm$ 0.08  \\
PoLR+ESC  & 92.00 $\pm$ 0.00  & 9.23 $\pm$ 0.05 & 91.92 $\pm$ 0.24  & 8.42 $\pm$ 0.03  \\ \bottomrule
\end{tabular}}
\label{app:tab:comp:math}
\end{table}

\subsection{PoLR Complements Adaptive and Early-Stopping Consistency Across Datasets}
Tables~\ref{app:tab:comp:gpqa} and \ref{app:tab:comp:math} evaluate PoLR in
combination with Adaptive Consistency (AC) and Early-Stopping Consistency (ESC)
on two contrasting benchmarks: \textsc{GPQA-Diamond} (STEM reasoning with
highly diverse, less predictable traces) and \textsc{Math500} (structured math
reasoning with strong prefix regularities).

\paragraph{\textsc{GPQA-Diamond}.}
On \textsc{GPQA}, prefixes are less predictive of correctness, making consensus weaker.
Here, PoLR alone already reduces expansions substantially (e.g., DSQ7B:
20.79 vs.\ 51 under SC at $N=51$), but occasionally trails SC in accuracy.
However, when combined with AC or ESC, PoLR consistently lowers \textit{PExp}
by an additional $30$--$40\%$ while preserving or even slightly improving
accuracy. For instance, PoLR+AC with DSQ7B cuts expansions to $10.58$ from
$18.05$ under AC, with no loss in performance. This highlights PoLR’s role as a
\emph{pruning front-end} that removes clearly redundant reasoning paths before
adaptive allocation.

\paragraph{\textsc{Math500}.}
On structured math problems, prefix clusters are highly reliable. PoLR alone
reduces expansions by more than half (e.g., DSQ7B: 22.47 vs.\ 51 at $N=51$)
while matching SC accuracy. When combined with AC, expansions drop to as low as
5.69 per problem (DSQ7B, $N=51$) without measurable accuracy loss, achieving
up to $5\times$ efficiency gains. ESC also benefits: PoLR+ESC consistently
reduces expansions (e.g., 9.23 vs.\ 10.66 on QWQ32B) while retaining SC-level
performance.

\paragraph{Takeaway.}
The contrast between \textsc{GPQA-Diamond} and \textsc{Math500} illustrates the conditions under which
PoLR is most effective. On tasks with highly structured reasoning (\textsc{Math500}),
PoLR is nearly lossless and compounds the efficiency of adaptive strategies to
yield massive savings. On tasks with more diverse or less predictable reasoning
paths (\textsc{GPQA}), PoLR still reduces redundancy and enhances existing adaptive
methods, though accuracy gains are less pronounced. Together, these results
show that PoLR is both a strong standalone alternative to SC and a
\emph{universal enhancer} for adaptive consistency methods across reasoning
domains.


\section{Effect of prefix length on efficiency and cluster structure}
\label{app:nmi_raw}
In this section, we provide the cluster skew, NMI and efficiencies gains with varying prefix lengths for GSM8K dataset. The plots are provided in Section \ref{sec:nmi} in Figure \ref{fig:gsm8k_prefix_nmi}. For reproducibility, we are providing the raw numbers in Table \ref{tabel:gsm8k_prefix_nmi} for each of the plots in Figure \ref{fig:gsm8k_prefix_nmi}.

\begin{table}[]
\centering
\caption{Efficiency gains of PoLR across three models on \textsc{GSM8K} as a function of prefix length.
    All models achieve over 50\% token savings by 256-512 prefix tokens. Here \textit{PEff} denotes $1-\frac{\textit{PExp}}{N}$.}
\begin{adjustbox}{width=0.98\linewidth}
\begin{tabular}{r | lll | lll | lll}
\toprule
\multicolumn{1}{c}{}                         & \multicolumn{3}{c}{{DSQ7B (80\%)}}                                                                          & \multicolumn{3}{c}{{DSQ1.5B (73\%)}}                                                                        & \multicolumn{3}{c}{Qwen2.5-math-7B (95\%)}                                                                          \\ \cmidrule(lr){2-4} \cmidrule(lr){5-7} \cmidrule(lr){8-10}
\multicolumn{1}{c}{{$L_p$}} & \multicolumn{1}{c}{\textit{PEff}} & \multicolumn{1}{c}{{avg\_skew}} & \multicolumn{1}{c}{{avg\_nmi}} & \multicolumn{1}{c}{\textit{PEff}} & \multicolumn{1}{c}{{avg\_skew}} & \multicolumn{1}{c}{{avg\_nmi}} & \multicolumn{1}{c}{\textit{PEff}} & \multicolumn{1}{c}{{avg\_skew}} & \multicolumn{1}{c}{{avg\_nmi}} \\ \midrule
{2}                         & 0.336                             & 0.644                                  & 0.066                                 & 0.270                             & 0.662                                  & 0.067                                 & 0.043                             & 0.963                                  & 0.009                                 \\
{4}                         & 0.375                             & 0.594                                  & 0.070                                 & 0.346                             & 0.614                                  & 0.066                                 & 0.065                             & 0.937                                  & 0.012                                 \\
{8}                         & 0.415                             & 0.551                                  & 0.074                                 & 0.368                             & 0.556                                  & 0.075                                 & 0.097                             & 0.896                                  & 0.016                                 \\
{16}                        & 0.440                             & 0.520                                  & 0.080                                 & 0.412                             & 0.516                                  & 0.081                                 & 0.187                             & 0.791                                  & 0.038                                 \\
{32}                        & 0.469                             & 0.494                                  & 0.091                                 & 0.443                             & 0.488                                  & 0.096                                 & 0.343                             & 0.619                                  & 0.087                                 \\
{64}                        & 0.500                             & 0.469                                  & 0.107                                 & 0.475                             & 0.468                                  & 0.114                                 & 0.409                             & 0.567                                  & 0.111                                 \\
{128}                       & 0.529                             & 0.462                                  & 0.123                                 & 0.501                             & 0.462                                  & 0.127                                 & 0.442                             & 0.545                                  & 0.129                                 \\
{256}                       & 0.524                             & 0.463                                  & 0.142                                 & 0.516                             & 0.471                                  & 0.146                                 & 0.537                             & 0.514                                  & 0.157                                 \\
{512}                       & 0.528                             & 0.473                                  & 0.150                                 & 0.525                             & 0.478                                  & 0.158                                 & 0.579                             & 0.515                                  & 0.176                                 \\
{1024}                      & 0.526                             & 0.471                                  & 0.151                                 & 0.520                             & 0.475                                  & 0.161                                 & 0.558                             & 0.522                                  & 0.184                                 \\
{2048}                      & 0.528                             & 0.471                                  & 0.150                                 & 0.530                             & 0.478                                  & 0.162                                 & 0.548                             & 0.532                                  & 0.187                                 \\
4096                               & 0.534                             & 0.470                                  & 0.151                                 & 0.537                             & 0.477                                  & 0.165                                 & 0.545                             & 0.532                                  & 0.187        \\ \bottomrule                         
\end{tabular}
\end{adjustbox}
\label{tabel:gsm8k_prefix_nmi}
\end{table}

\section{Cluster size is the indicator of best performance}
\label{app:cardinality}
Following the structure of Table~\ref{tab:main}, we report the cluster-wise self-consistency  across different prefix lengths $L_p \in \{1, 2, 4, 8, 16, 32, 64, 128, 256, 512, 1024, 2048, 4096\}$, \textit{cluster 0} being the dominant cluster with highest number of reasoning prefixes, in Figure \ref{fig:app:cardinality}. We observe that dominant cluster consistently delivers best accuracies across the model dataset prefix length combinations except for the AIME24/25 dataset. It is likely because these benchmarks (e.g., AIME with only 30 samples) are relatively small, reducing statistical robustness. Therefore, the plots for AIME seems to be a bit noisy. However, for all the combinations we observed strong correlation between  the cluster sizes and the corresponding accuracies for all the dataset $\rho > 0.75$. Therefore, in PoLR, \textbf{clustering cardinality is the best indicator of the cluster to be expanded. }

\begin{figure}
\begin{subfigure}{.33\textwidth}
  \centering
  \includegraphics[width=\linewidth]{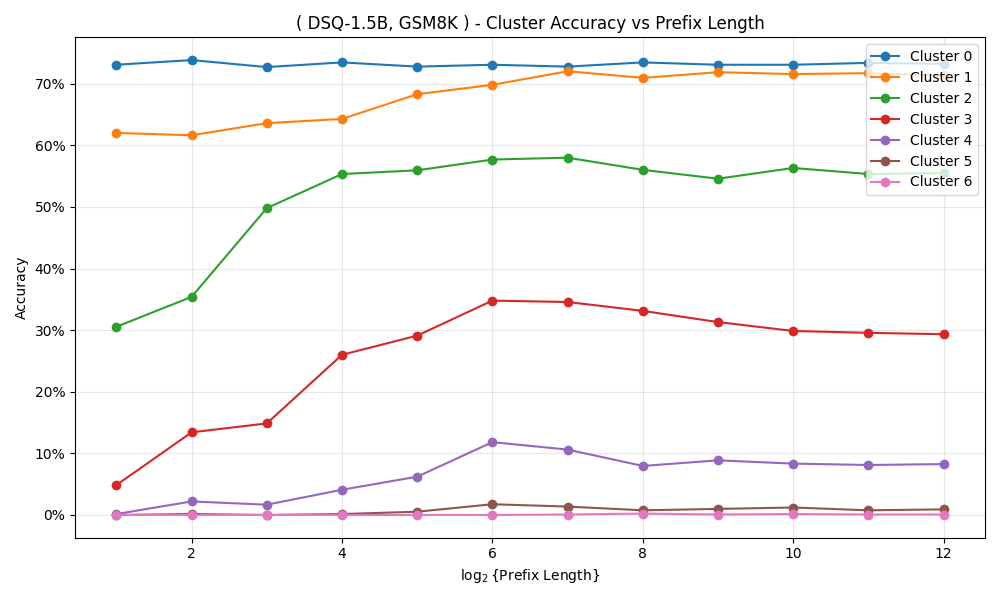}
  \caption{\textsc{GSM8K}: DSQ1.5B}
  \label{fig:sfigas1}
\end{subfigure}%
\begin{subfigure}{.33\textwidth}
  \centering
  \includegraphics[width=\linewidth]{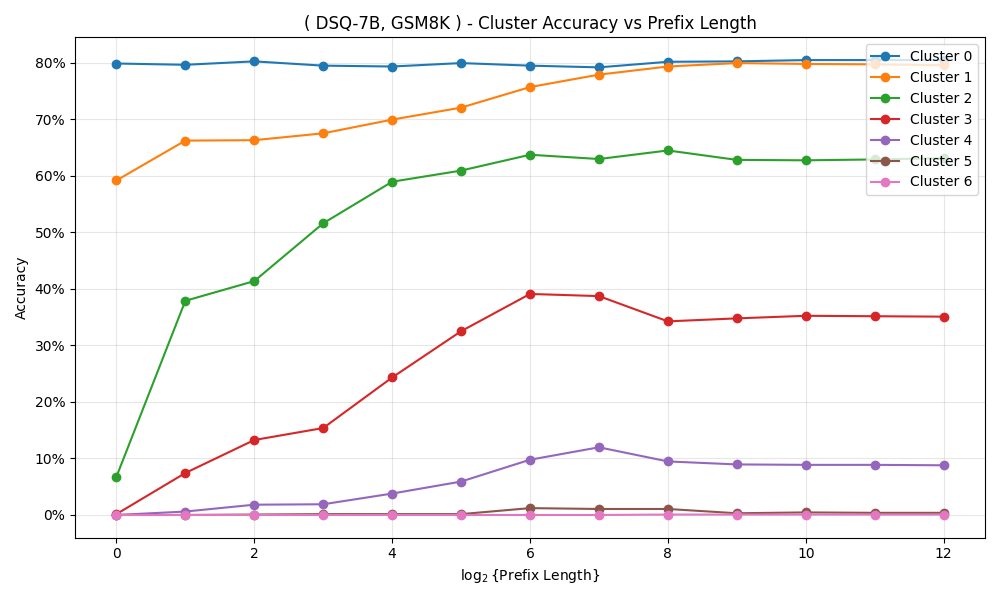}
  \caption{\textsc{GSM8K}: DSQ7B}
  \label{fig:sfiga2}
\end{subfigure}%
\begin{subfigure}{.33\textwidth}
  \centering
  \includegraphics[width=\linewidth]{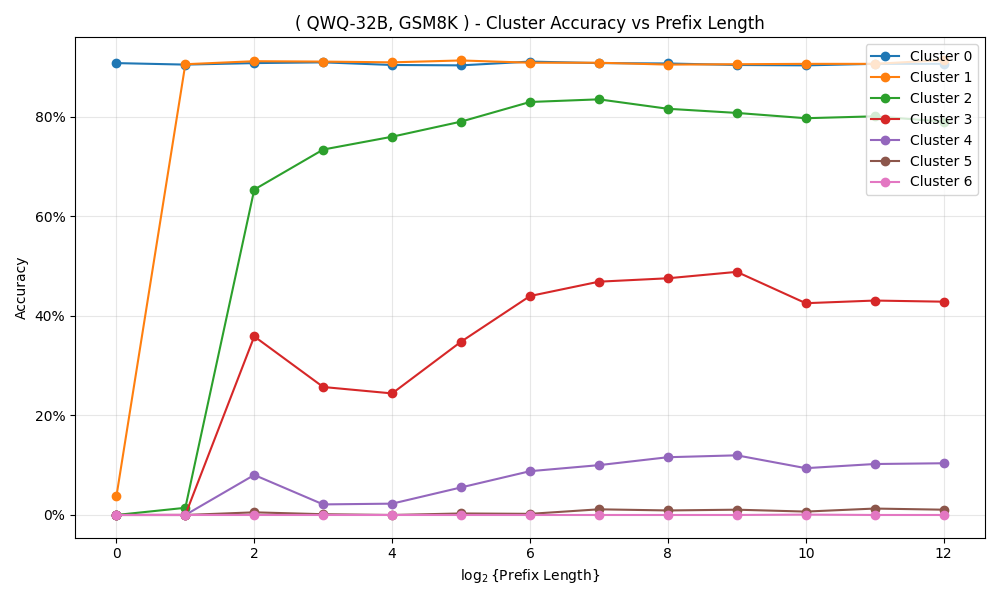}
  \caption{\textsc{GSM8K}: QWQ32B}
  \label{fig:sfigs2}
\end{subfigure}\\
\begin{subfigure}{.33\textwidth}
  \centering
  \includegraphics[width=\linewidth]{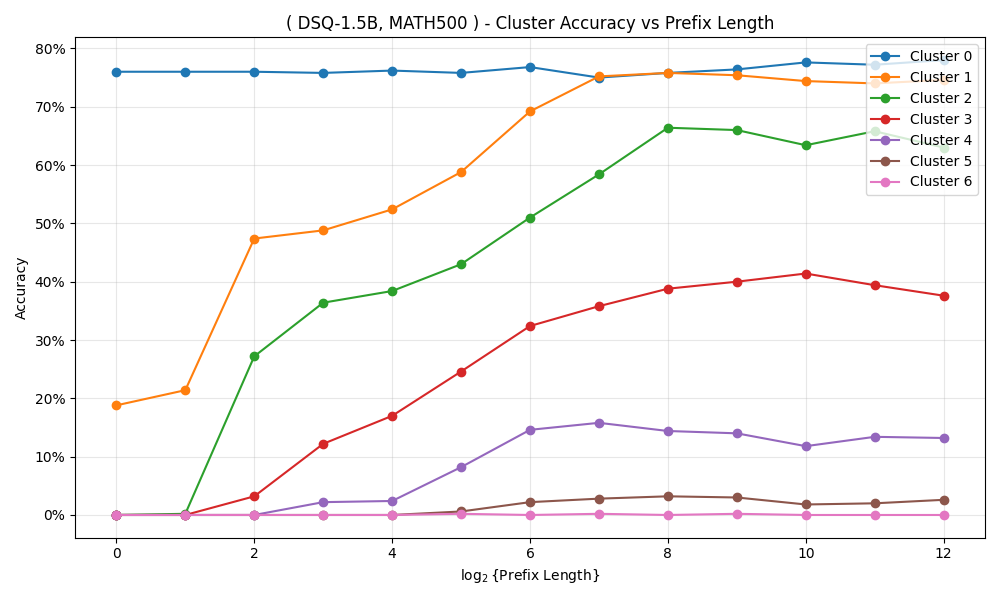}
  \caption{\textsc{Math500}: DSQ1.5B}
  \label{fig:sfigd1}
\end{subfigure}%
\begin{subfigure}{.33\textwidth}
  \centering
  \includegraphics[width=\linewidth]{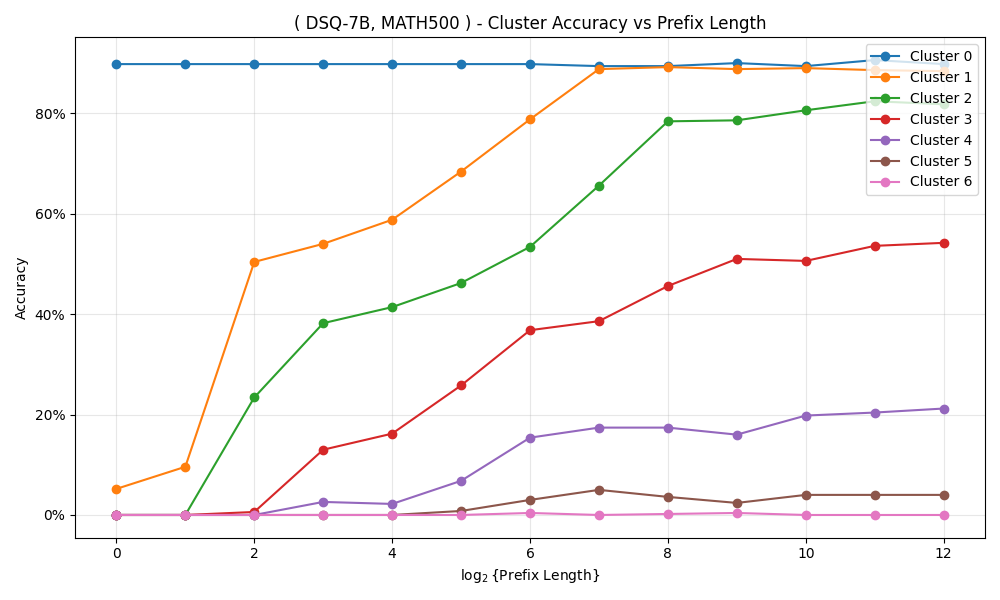}
  \caption{\textsc{Math500}: DSQ7B}
  \label{fig:sfigf2}
\end{subfigure}%
\begin{subfigure}{.33\textwidth}
  \centering
  \includegraphics[width=\linewidth]{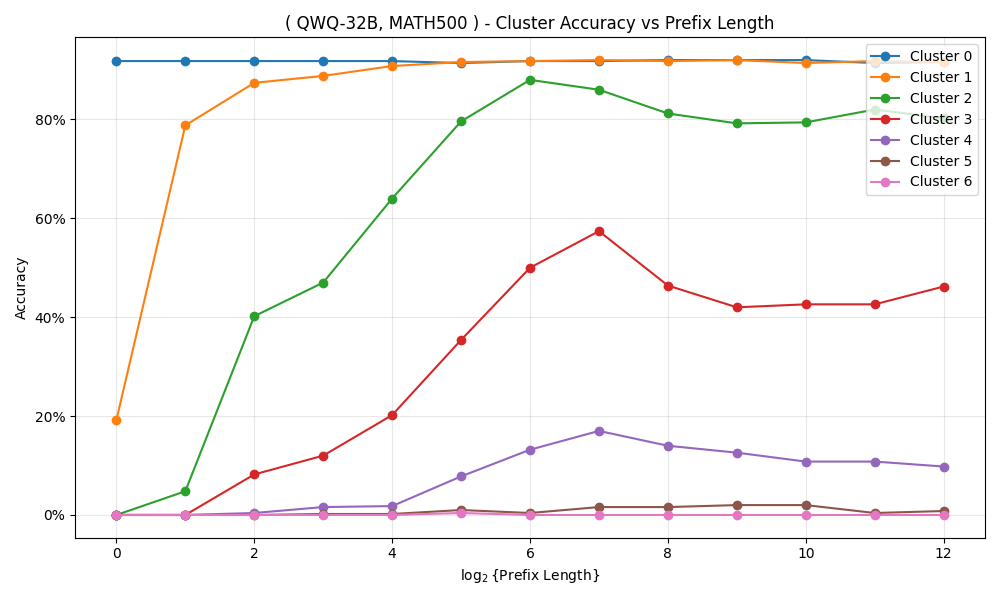}
  \caption{\textsc{Math500}: QWQ32B}
  \label{fig:sfigy2}
\end{subfigure}\\
\begin{subfigure}{.33\textwidth}
  \centering
  \includegraphics[width=\linewidth]{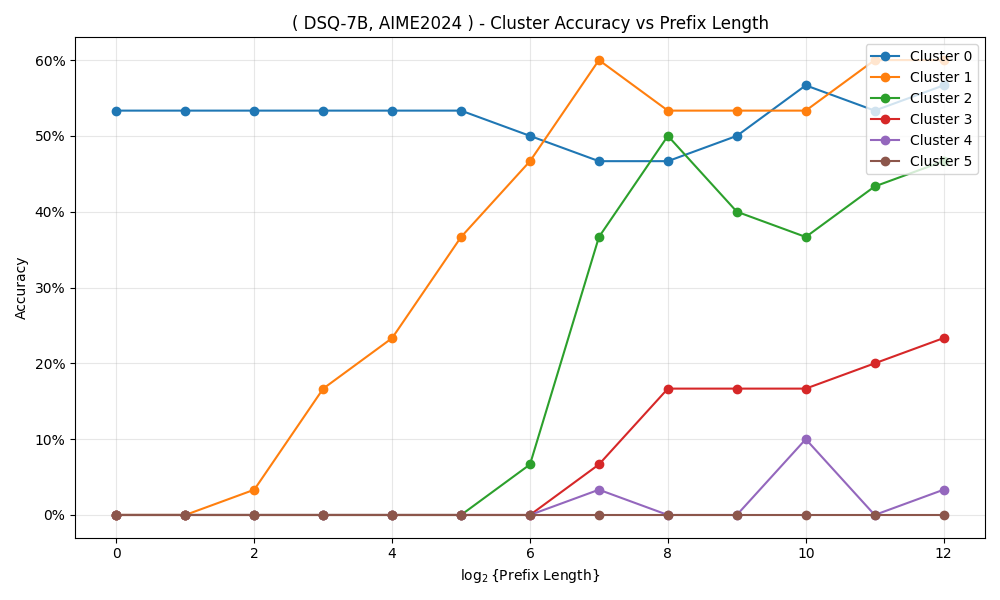}
  \caption{\textsc{AIME24}: DSQ7B}
  \label{fig:sfigu1}
\end{subfigure}%
\begin{subfigure}{.33\textwidth}
  \centering
  \includegraphics[width=\linewidth]{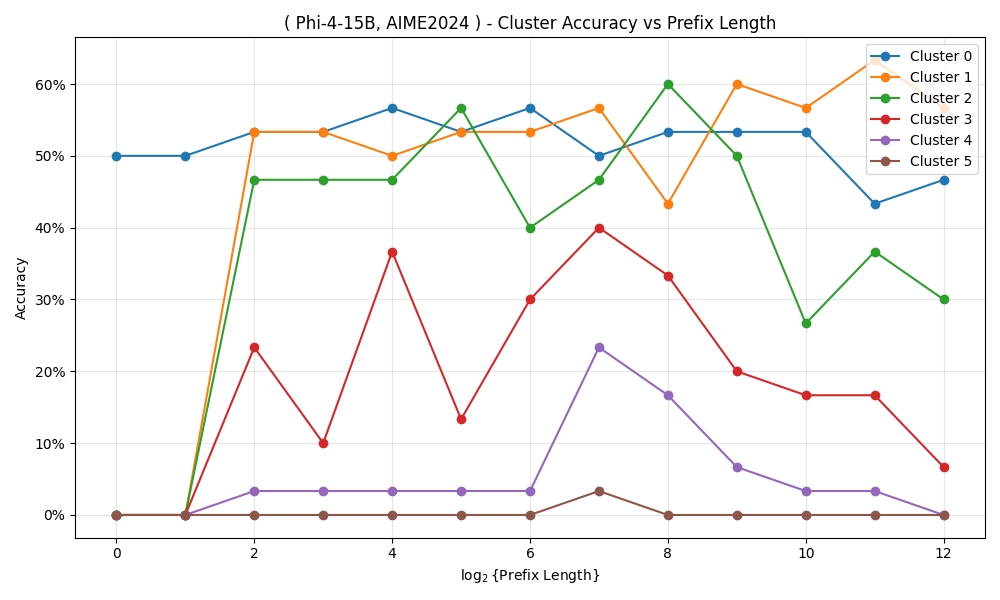}
  \caption{\textsc{AIME24}: Phi-4-15B}
  \label{fig:sfigi2}
\end{subfigure}%
\begin{subfigure}{.33\textwidth}
  \centering
  \includegraphics[width=\linewidth]{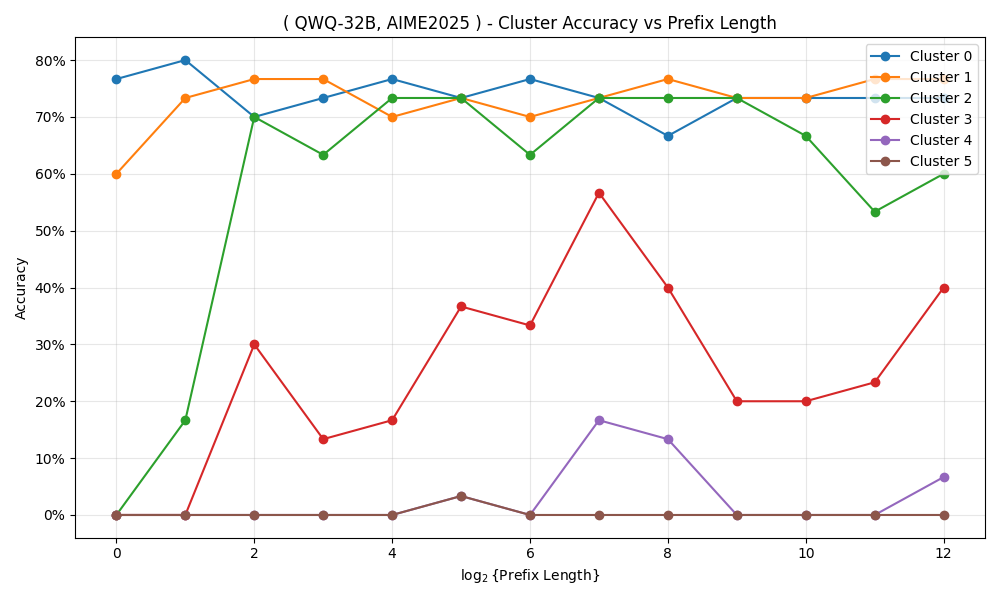}
  \caption{\textsc{AIME24}: QWQ32B}
  \label{fig:sfigo2}
\end{subfigure}\\
\begin{subfigure}{.33\textwidth}
  \centering
  \includegraphics[width=\linewidth]{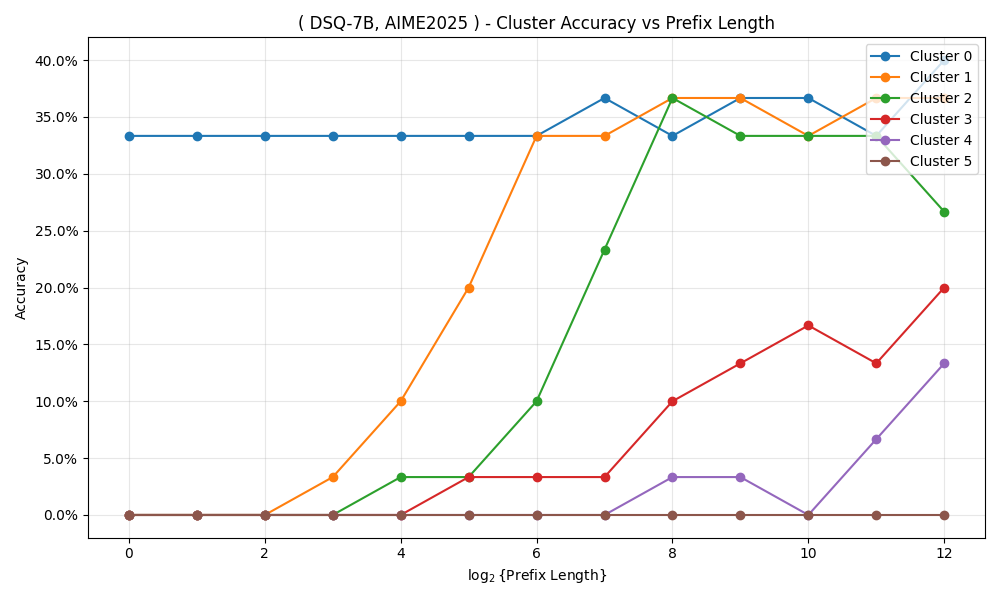}
  \caption{\textsc{AIME25}: DSQ7B}
  \label{fig:sfigp1}
\end{subfigure}%
\begin{subfigure}{.33\textwidth}
  \centering
  \includegraphics[width=\linewidth]{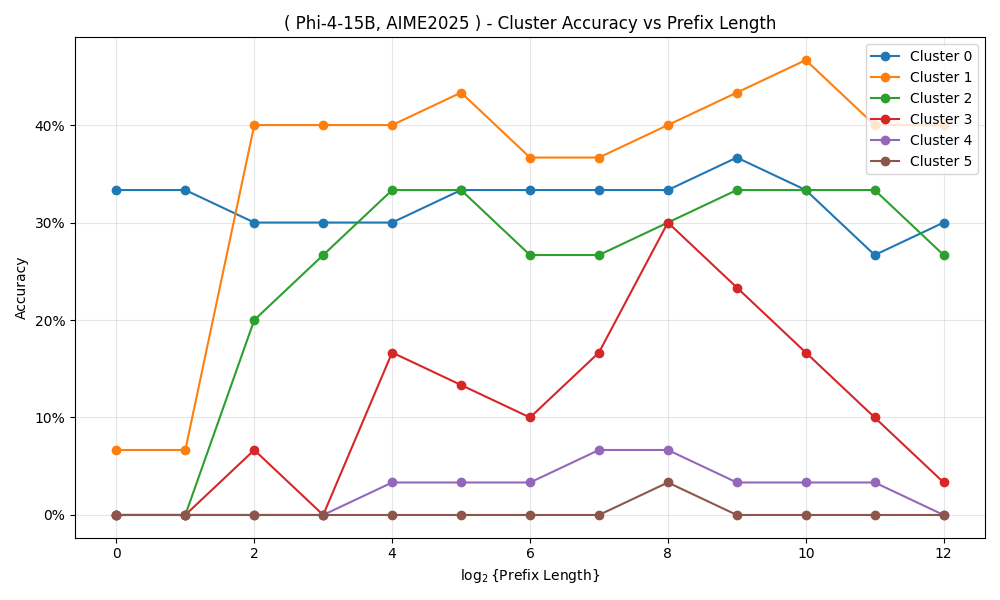}
  \caption{\textsc{AIME25}: Phi-4-15B}
  \label{fig:sfigq2}
\end{subfigure}%
\begin{subfigure}{.33\textwidth}
  \centering
  \includegraphics[width=\linewidth]{cardinality/QWQ-32B_AIME2025_cluster_accuracy_vs_prefix_length.png}
  \caption{\textsc{AIME25}: QWQ32B}
  \label{fig:sfigw2}
\end{subfigure}\\
\begin{subfigure}{.33\textwidth}
  \centering
  \includegraphics[width=\linewidth]{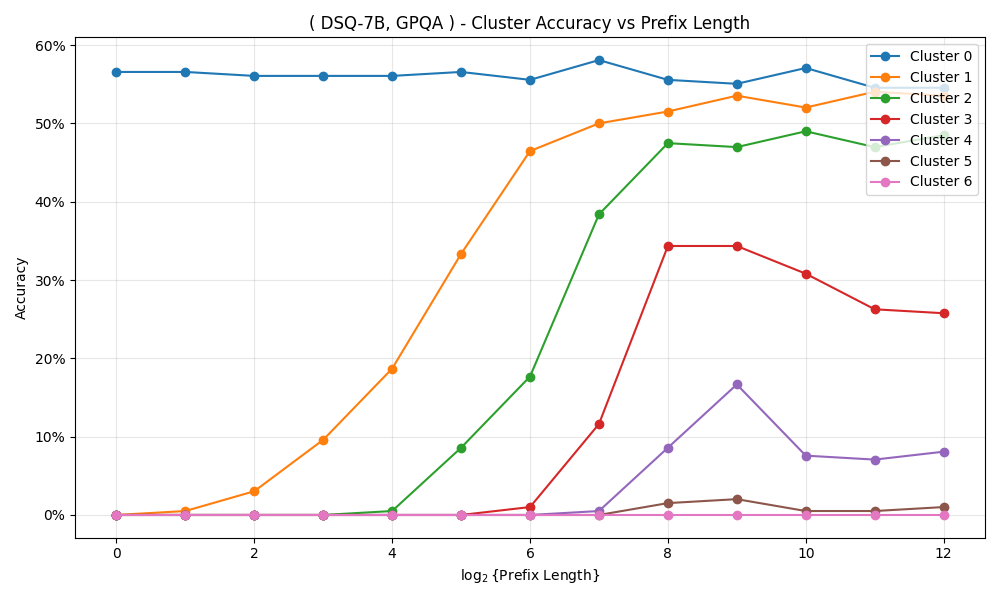}
  \caption{\textsc{GPQA-D}: DSQ7B}
  \label{fig:sfige1}
\end{subfigure}%
\begin{subfigure}{.33\textwidth}
  \centering
  \includegraphics[width=\linewidth]{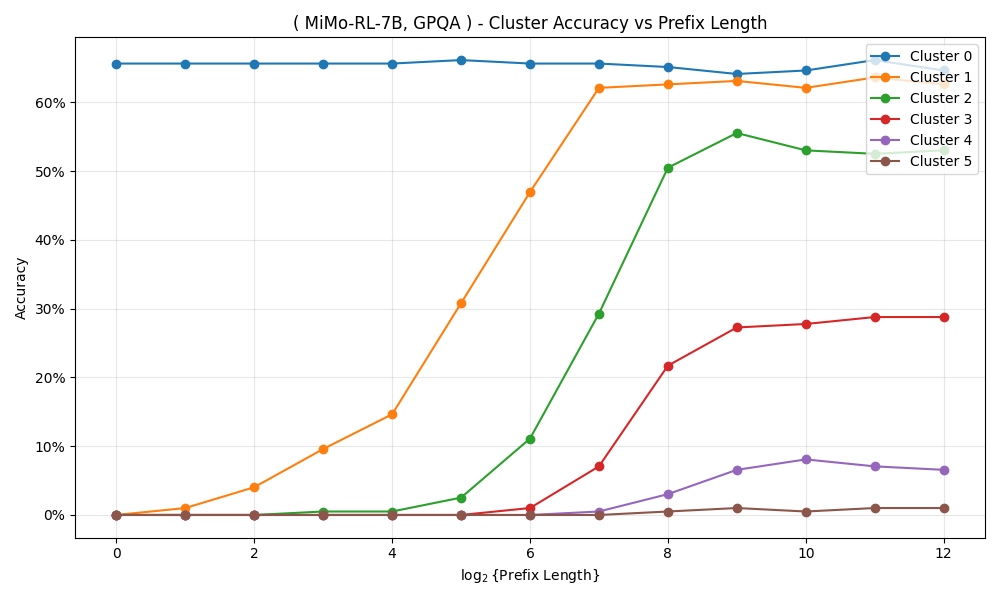}
  \caption{\textsc{GPQA-D}: MiMo-RL-7B}
  \label{fig:sfigr2}
\end{subfigure}%
\begin{subfigure}{.33\textwidth}
  \centering
  \includegraphics[width=\linewidth]{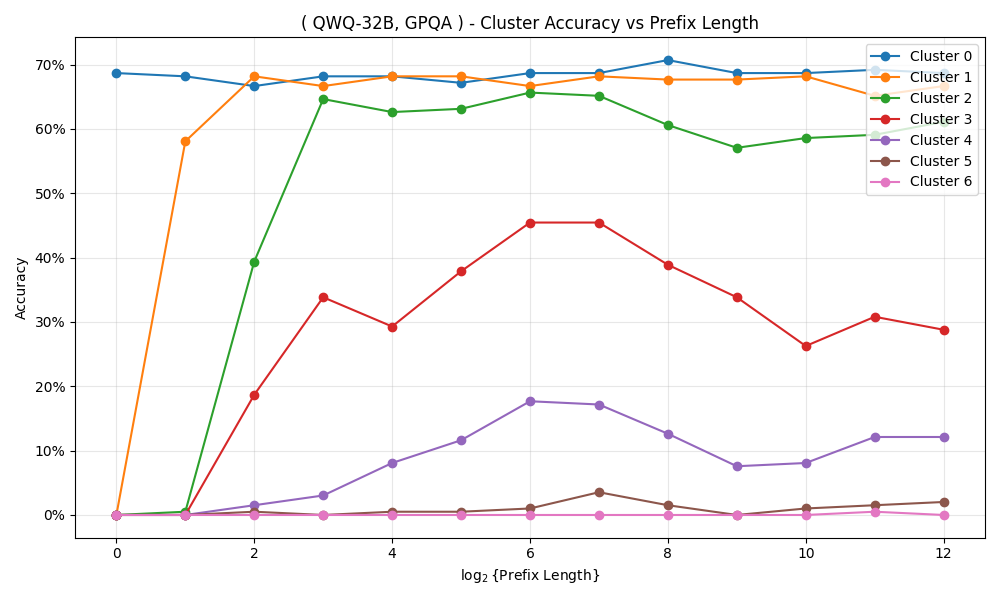}
  \caption{\textsc{GPQA-D}: QWQ32B}
  \label{fig:sfigt2}
\end{subfigure}
\caption{Cluster-wise self-consistency for (\textsc{GSM8K}, \textsc{Math500}, \textsc{AIME24}, \textsc{AIME25}, \textsc{GPQA-Diamond}) datasets with different model sizes.}
\label{fig:app:cardinality}
\end{figure}

\section{PoLR evaluation on non-math datasets}
\label{app:sec:stratergy}
To further evaluate the generality of our observations beyond math/STEM settings, we also experimented with a non-math/STEM QA dataset, \textsc{StrategyQA} with DSQ7B model. We find that PoLR continues to match SC accuracy preserving the token efficiency benefits as depicted in Table \ref{app:tab:stratgyqa}.

\begin{table}[]
\caption{Efficiency gains of PoLR on \textsc{StrategyQA} DSQ7B combination as compared to SC.}
\begin{adjustbox}{width=\linewidth}
\begin{tabular}{l | rrrrrrrrrrrrr}
\toprule
$L_p \rightarrow$      & 2    & 4    & 8    & 16   & 32   & 64   & 128  & 256  & 512  & 1024 & 2048 & 4096 \\ \midrule
SC (\%)             & 59.8 & 59.8 & 59.8 & 59.8 & 59.8 & 59.8 & 59.8 & 59.8 & 59.8 & 59.8 & 59.8 & 59.8 \\
PoLR (\%)           & 60.0 & 59.0 & 59.0 & 59.8 & 57.9 & 57.9 & 57.6 & 59.6 & 59.8 & 61.1 & 60.5 & 61.8 \\
\textit{PEff}           & 0.185 & 0.275 & 0.313 & 0.361 & 0.470 & 0.565 & 0.646 & 0.662 & 0.657 & 0.655 & 0.651 & 0.656 \\
$\eta$            & 0.110  & 0.107 & 0.146 & 0.192 & 0.290 & 0.383 & 0.430 & 0.315 & 0.150 & 0.063  & 0.058  & 0.061  \\
$k_t$              & 1.3  & 1.4  & 1.5  & 1.7  & 3.7  & 7.4  & 11.2 & 9.4  & 12.2 & 12.1 & 13.0 & 13.1 \\ \bottomrule
\end{tabular}
\label{app:tab:stratgyqa}
\end{adjustbox}
\end{table}

Across prefix lengths, PoLR maintains accuracy comparable to or slightly better than SC, while achieving strong token efficiency improvements ($\eta$), consistent with our observations on other math and STEM datasets.

\section{PoLR Instance-level error analysis}
\label{app:sec:instance}
To better understand the observed 10\% accuracy drop for QwQ32B on AIME2025 dataset, we perform an instance-level analysis of the cases where SC passes but PoLR fails. Though this drop seems large, AIME2025 contains only 30 samples, meaning a 10\% drop corresponds to a deviation of just three problems. Given this small sample size, even a few challenging instances can produce noticeable fluctuations.

In this analysis, for each incorrect instance, we inspect the cluster structures formed by PoLR. For each failure instance, We compute the number of correct reasoning traces within each cluster and compared it against the SC consensus in Table \ref{app:table:error}. We observe that the SC consensus for these failure cases is substantial low as compared to the average case (last row block in Table \ref{app:table:error}) indicating that these instances are inherently difficult, even for the SC baseline. Further, all these failure case shows weaker cluster purity making PoLR’s selection task more ambiguous. However, the primary contribution of PoLR remains to be the substantial improvements in token efficiency and cost reduction, rather than surpassing SC in raw accuracy. Therefore, \textbf{for the challenging problems where even the SC baseline solves the task only narrowly, PoLR is not expected to outperform SC in accuracy. }

\begin{table}[]
\centering
\caption{Instance-level error analysis of PoLR on AIME25 dataset QWQ32B model combination. }
\begin{tabular}{crrrrrc}
\toprule
 & \multirow{2}{*}{\begin{tabular}[c]{@{}r@{}}Cluster\\ number\end{tabular}} & \multirow{2}{*}{\textbf{\begin{tabular}[c]{@{}r@{}}Cluster\\ size\end{tabular}}} & \multirow{2}{*}{\textbf{Correct}} & \multirow{2}{*}{\textbf{Incorrect}} & \multirow{2}{*}{\textbf{\begin{tabular}[c]{@{}r@{}}\%Correct\\      Within\end{tabular}}} & \multirow{2}{*}{\textbf{\begin{tabular}[c]{@{}c@{}}SC\\      consensus \%\end{tabular}}} \\ 
                                                                                  &                                                                           &                                                                                         &                                   &                                     &                                                                                           &                                                                                          \\ \midrule
\multirow{3}{*}{{\begin{tabular}[c]{@{}c@{}}Failure\\ case 1\end{tabular}}}                                                                & 0                                                                         & 20                                                                                      & 9                                 & 11                                  & 45.0\%                                                                                    & \multirow{3}{*}{53.0\%}                                                                  \\
                                                                                  & 1                                                                         & 19                                                                                      & 10                                & 9                                   & 53.0\%                                                                                    &                                                                                          \\
                                                                                  & 2                                                                         & 12                                                                                      & 8                                 & 4                                   & 67.0\%                                                                                    &                                                                                          \\  \midrule
\multirow{3}{*}{{\begin{tabular}[c]{@{}c@{}}Failure\\ case 2\end{tabular}}}                                                                 & 0                                                                         & 22                                                                                      & 9                                 & 13                                  & 41.0\%                                                                                    & \multirow{3}{*}{59.0\%}                                                                  \\
                                                                                  & 1                                                                         & 15                                                                                      & 11                                & 4                                   & 73.0\%                                                                                    &                                                                                          \\
                                                                                  & 2                                                                         & 14                                                                                      & 10                                & 4                                   & 71.0\%                                                                                    &                                                                                          \\ \midrule
\multirow{4}{*}{{\begin{tabular}[c]{@{}c@{}}Failure\\ case 3\end{tabular}}}                                                                & 0                                                                         & 20                                                                                      & 9                                 & 11                                  & 45.0\%                                                                                    & \multirow{4}{*}{63.0\%}                                                                  \\
                                                                                  & 1                                                                         & 13                                                                                      & 8                                 & 5                                   & 62.0\%                                                                                    &                                                                                          \\
                                                                                  & 2                                                                         & 13                                                                                      & 11                                & 2                                   & 85.0\%                                                                                    &                                                                                          \\
                                                                                  & 3                                                                         & 5                                                                                       & 4                                 & 1                                   & 80.0\%                                                                                    &                                                                                          \\ \midrule
\multirow{3}{*}{\begin{tabular}[c]{@{}c@{}}Random \\ correct \\      case\end{tabular}}     & 0                                                                         & 27                                                                                      & 26                                & 1                                   & 96.0\%                                                                                    & \multirow{3}{*}{96.0\%}                                                                  \\
                                                                                  & 1                                                                         & 19                                                                                      & 19                                & 0                                   & 100.0\%                                                                                   &                                                                                          \\
                                                                                  & 2                                                                         & 5                                                                                       & 4                                 & 1                                   & 80.0\%                                                                                    &                                                                                          \\ \midrule
\multirow{3}{*}{\begin{tabular}[c]{@{}c@{}}Average\\ case\end{tabular}}           & 0                                                                         & 21.77                                                                                   & 14.6                              & 7.2                                 & 66.1\%                                                                                    & \multirow{3}{*}{67.9\%}                                                                  \\
                                                                                  & 1                                                                         & 14.63                                                                                   & 10.1                              & 4.6                                 & 68.5\%                                                                                    &                                                                                          \\
                                                                                  & 2                                                                         & 9.70                                                                                    & 7.0                               & 2.7                                 & 65.3\%                                                                                    &                                                                                         \\ \bottomrule
\end{tabular}
\label{app:table:error}
\end{table}

\section{Impact of Temperature Sampling}
\label{app:sec:hyperparmeter}

To evaluate whether dynamically adjusting the prefix length based on the sampling temperature can improve performance, we conduct a controlled study on \textsc{MATH500} using DSQ7B. We vary prefix lengths from 1 to 4096 and LLM sampling temperatures from 0.2 to 1.0 in Table \ref{app:tab:temperature}.

Across all temperatures and prefix lengths, the accuracy varies only within a narrow band of $\pm 1$ points, showing no consistent trend that correlates temperature with an optimal prefix length suggesting that sampling temperature does not meaningfully interact with prefix length, and the best-performing prefix lengths for DSQ7B remain effectively constant.

\begin{table}[]
\centering
\caption{PoLR is robust to different temperature samplings. }
\begin{tabular}{r | lllll}
\toprule
     \multirow{2}{*}{$L_p$} & \multicolumn{5}{c}{Sampling Temperatures} \\ \cmidrule(lr){2-6}
 & 0.2    & 0.4    & 0.6    & 0.8    & 1     \\ \midrule
1     & 88.0   & 89.4   & 89.2   & 89.2   & 89.0  \\
2     & 88.0   & 89.4   & 89.2   & 89.2   & 88.8  \\
4     & 88.2   & 89.4   & 89.0   & 88.6   & 88.2  \\
8     & 88.2   & 89.2   & 89.2   & 88.8   & 88.0  \\
16    & 88.2   & 89.6   & 89.0   & 88.8   & 89.4  \\
32    & 88.2   & 89.2   & 89.4   & 89.0   & 89.2  \\
64    & 88.0   & 89.0   & 89.0   & 89.4   & 88.4  \\
128   & 87.6   & 89.4   & 89.0   & 88.6   & 88.6  \\
256   & 88.4   & 89.4   & 89.4   & 88.6   & 88.0  \\
512   & 88.4   & 90.0   & 89.6   & 89.4   & 89.0  \\
1024  & 88.2   & 88.6   & 89.2   & 89.0   & 88.6  \\
2048  & 88.2   & 89.4   & 89.2   & 89.0   & 88.6  \\
4096  & 88.6   & 89.6   & 89.2   & 88.6   & 88.4  \\ \midrule
SC    & 88.0   & 89.4   & 89.2   & 89.2   & 88.8  \\ \bottomrule
\end{tabular}
\label{app:tab:temperature}
\end{table}

\end{document}